\def\eqref#1{equation~\ref{#1}}
\def\1{\bm{1}}
\DeclareMathAlphabet{\mathsfit}{\encodingdefault}{\sfdefault}{m}{sl}
\SetMathAlphabet{\mathsfit}{bold}{\encodingdefault}{\sfdefault}{bx}{n}
\newcommand{\sigmoid}{\sigma}
\newtheorem{theorem}{Theorem}
\newcommand{\gr@dient}[8]{
  \int_set:Nn\l_gtext_MaxIndex_int{\int_eval:n{\str_count:n{#1}}}
  \int_step_inline:nnn{1}{\l_gtext_MaxIndex_int}{
      \exp_args:Ne\str_if_eq:nnTF{\str_item:Nn{#1}{##1}}{~}{}{
        \int_set:Nn\l_gtext_Ratio_int{\int_eval:n{\l_gtext_Ratio_int+1}}
      }
        \color_select:nn{#8}{
          \int_eval:n{(\int_use:N\l_gtext_Ratio_int*#5+(\l_gtext_MaxIndex_int-##1)*#2)/\l_gtext_MaxIndex_int},
          \int_eval:n{(\int_use:N\l_gtext_Ratio_int*#6+(\l_gtext_MaxIndex_int-##1)*#3)/\l_gtext_MaxIndex_int},
          \int_eval:n{(\int_use:N\l_gtext_Ratio_int*#7+(\l_gtext_MaxIndex_int-##1)*#4)/\l_gtext_MaxIndex_int}
      }\str_item:Nn{#1}{##1}
  }
}
\NewDocumentCommand\gradient{mmmm}{{
  \clist_set:Nn\l_gtext_First_clist {#3}
  \clist_set:Nn\l_gtext_Last_clist {#4}
  \gr@dient{#2}
  {\clist_item:Nn\l_gtext_First_clist{1}}
  {\clist_item:Nn\l_gtext_First_clist{2}}
  {\clist_item:Nn\l_gtext_First_clist{3}}
  {\clist_item:Nn\l_gtext_Last_clist{1}}
  {\clist_item:Nn\l_gtext_Last_clist{2}}
  {\clist_item:Nn\l_gtext_Last_clist{3}}
  {#1}
}}
\newcommand{\coloredmethodname}{\gradient{HSB}{RainbowPO}{0,240,200}{240,240,200}}
\definecolor{red}{rgb}{0.969,0.086,0.278}
\definecolor{green}{rgb}{0.0235, 0.5412, 0.2118}
\newcommand\benchmarknameonly{\textcolor{black}{\textsc{RainbowPO}}}
\newcommand\xpos{\textcolor{black}{\textsc{xPOs}}}
\newcommand\xpo{\textcolor{black}{\textsc{xPO}}}
\title{Rainbow~PO: A Unified Framework for Combining Improvements in Preference Optimization}
\author{Hanyang Zhao$^1$\thanks{Equal Contribution. Contacts: \texttt{hz2684@columbia.edu, genta.winata@capitalone.com, anirban.das3@capitalone.com}.}\text{ }\hspace{0.8mm}, Genta Indra Winata$^2$$^*$, Anirban Das$^2$$^*$, Shi-Xiong Zhang$^2$, \\
\textbf{David D. Yao$^1$, Wenpin Tang$^1$, Sambit Sahu$^2$} \\
$^1$Columbia University $\quad$ $^2$Capital One \\
}
\newcommand{\methodname}{\textcolor{black}{\textsc{RainbowPO}}}
\begin{document}

\maketitle

\begin{abstract}

Recently, numerous preference optimization algorithms have been introduced as extensions to the Direct Preference Optimization (DPO) family. While these methods have successfully aligned models with human preferences, there is a lack of understanding regarding the contributions of their additional components. Moreover, fair and consistent comparisons are scarce, making it difficult to discern which components genuinely enhance downstream performance. 
In this work, we propose $\benchmarknameonly$, a unified framework that demystifies the effectiveness of existing DPO methods by categorizing their key components into seven broad directions. We integrate these components into a single cohesive objective, enhancing the performance of each individual element. Through extensive experiments, we demonstrate that $\benchmarknameonly$ outperforms existing DPO variants. Additionally, we provide insights to guide researchers in developing new DPO methods and assist practitioners in their implementations. Our code is available at
\texttt{\href{https://github.com/CapitalOne-Research/RainbowPO}{https://github.com/CapitalOne-Research/RainbowPO}}.
\end{abstract}

\section{Introduction}
Reinforcement Learning with Human Feedback (RLHF) \citep{RLHF2022,LSHF2020,FTLM2020} has significantly contributed to the success of recently released Large Language Models (LLMs) such as InstructGPT \citep{RLHF2022}, ChatGPT, and GPT4 \citep{GPT4}. However, RLHF is a complex and resource intensive process and requires training a reward model. An alternative to RLHF is Direct Preference Optimization (DPO) \citep{DPO} that directly optimizes policies from pairwise preferences by minimizing a supervised learning loss objective, which is viewed as the maximum likelihood estimate for the reward model in RLHF. This approach allows DPO and other DPO variants to bypass the use of RL, resulting in faster speed of end-to-end training and better resource efficiency, while achieving comparable or superior performance to RLHF in downstream tasks such as summarization \citep{DPO}.

DPO and its success during training foundation models like Llama series \citep{dubey2024llama, touvron2023llama}, Mistral \citep{jiang2023mistral}, has garnered significant research attention in the LLM alignment space \citep{winata2024preference, wang2024comprehensive}, leading to the development of various extensions. These include variants beyond pairwise ranking, such as Kahneman \& Tversky Optimization (KTO, \cite{KTO}) and MallowsPO \citep{chen2024mallows}, unified perspectives on loss parameterization, such as Identity Preference Optimization (IPO, \cite{IPO}) and Generalized Preference Optimization (GPO, \cite{GPO}), distribution correction methods like Rejection Sampling Optimization (RSO, \cite{RSO}), and reference model-free alternatives, such as Contrastive Preference Optimization (CPO, \cite{CPO}), Odds Ratio Preference Optimization (ORPO, \cite{ORPO}), and Simple Preference Optimization (SimPO, \cite{SimPO}). 
Each of these DPO variants claims to outperform the original DPO in downstream task evaluations by introducing specific additional components, or mathematically modifying the loss objective. 
In the rest of the paper, we will refer DPO variants collectively as $\xpos$ for simplicity.


Comparing these $\xpos$ proposed in the literature is not always straightforward due to differences in the base model size and architecture, 
the alignment datasets, the experimental setup as well as the evaluation metrics. Subsequently, it becomes difficult to assess the effectiveness and choose among different $\xpo$ methods given a problem. A brute force comparison across all existing methods is prohibitively expensive and inefficient. 
Therefore, it is crucial that we study the performance characteristics of each proposed method in the literature by evaluating the $\xpos$' performances under at least one convincing and representative setup.
Further, despite the success of the $\xpo$ family, a fundamental question remains unexplored:
$$\textit{What are the components proposed in \xpos} \textit{ that actually improve the performance over } \textsc{DPO}\textit{?}$$

\vspace{-4pt}
Surprisingly, there is still a lack of comprehensive work studying the progress in the literature and summarizing the core practical components of these methods that lead to improvement over DPO. To demystify the reasons for their effectiveness, we hypothesize that the main benefits of these methods stem from the combination of several mathematically orthogonal effective components. In this paper, we validate our hypothesis by decomposing the $\xpos$ and identifying these orthogonal components upon DPO. We further assess their effectiveness through downstream task evaluations, ruling out the components that do not contribute to performance improvements. 
Given these orthogonal identified beneficial components for preference optimization, a natural question arises:
\begin{align*}
&\textit{Can these individual components complement each other and be effectively combined?}
\end{align*}

\vspace{-4pt}
Our question is largely motivated by the previous study \textsc{Rainbow}~\citep{Rainbow} that explored improvements over Deep Q-Networks algorithm (DQN) \citep{mnih2015human} in traditional Reinforcement Learning (RL). The summarization and comparsion in~\citet{Rainbow} greatly enhances the understanding for improving DQN, and the resulting algorithm Rainbow, still serves as a benchmark \citep{SB3}. However, such a study for RLHF is still underexplored. This shows a gap in the literature, that elicits an answer to the question of combining different $\xpo$ extensions evaluated in a comprehensive setting. 
To bridge this gap, we propose $\methodname$, a unified framework that integrates existing $\xpos$' components, and deploys useful and essential components in a principled manner to achieve better performance. To conclude, our contributions in this paper are as follows:
\begin{enumerate}[leftmargin=20pt]
\item[(1)] We conduct a comprehensive study on more than 10 offline representative variants of DPOs ($\xpos$) from a {\it practical} aspect by analyzing their loss functions for optimization. We conclude several mathematically orthogonal directions along which these methods propose to optimize over the original DPO loss, analyze the usefulness of each method theoretically and empirically, and provide comparisons under the same representative setup.
\item[(2)] We identify and summarize 7 broad components across all DPO extensions: length normalization, link function, margin / home advantage, reference policy, contextual scaling,  rejection sampling optimization (RSO), and supervised fine-tuning (SFT) loss, and justify that {\it four} of them are effective through extensive hyper-parameters search, model training and evaluations. Additionally, we also propose a better way of formulating the reference policy by mixing the SFT policy with a margin (see details in the reference policy of Section \ref{subsec: components}), and demonstrate the advantage of this approach over using just SFT policy (in DPO) or just margin (in SimPO). 
\item[(3)] Finally, we propose $\methodname$,
a DPO variant that combines three essential and orthogonal components from existing $\xpos$. Combining other adjustment on optimization hyper-parameters, we show that our algorithms perform the best among all open-sourced algorithms when tuning Llama3-8B-Instruct. In the widely adopted LLM benchmark Alpaca-Eval2 \footnote{\url{https://github.com/tatsu-lab/alpaca_eval}.}, $\benchmarknameonly$ improves Llama3-8B-Instruct from 22.92\% to 51.66\% for Length Controlled Win Rate (LC WR), with just access to a reward model to form the offline preference dataset and no further online sampling. 
We also perform an ablation study and show that all adopted elements in $\benchmarknameonly$ are indeed necessary to achieve the best result.
\end{enumerate}

\textbf{Related Work}. Below we provide a (non-exhaustive) list of other relevant references to this work.

Compared to human feedback in original RLHF, existing works have improved the scalability by utilizing AI feedback \citep{bai2022constitutional,lee2023rlaif}. For such need of constructing better AI feedback, recent works also proposed various reward models for formulating better preference datasets, like PairRM \citep{jiang2023llm}, ArmoRM \citep{wang2024interpretable}, RRM \citep{liu2024rrm}, and RM benchmarks like Reward Bench \citep{lambert2024rewardbench}. 

We also find works that target at understanding DPO methods related to our work. \citet{liu2024understanding} studies the effect of reference policy in the preference optimization; \citet{saeidi2024insights} compare the performance of DPO, IPO, CPO, KTO for tuning Mistral 7B \citep{jiang2023mistral} based models, and mainly studied the roles of SFT stage for alignment methods.

The rest of the paper is organized as follows. We provide backgrounds on RLHF and DPO in Section \ref{section:prelim}. In Section \ref{section:rainbowpo}, we summarize the current directions in existing $\xpos$ and the development of $\benchmarknameonly$, followed by detailed experimental results in Section~\ref{section:experiments}. Finally, we present our conclusion in Section~\ref{section:conclusion}.

\section{Preliminaries and Motivation}
\label{section:prelim}

In this section, we first briefly introduce RLHF and DPO as the foundation method, and then discuss on extensions of DPO ($\xpos$) to understand what are the components proposed in the literature. 

RLHF starts with fine-tuning a pre-trained large language model by supervised learning on high-quality data for some downstream tasks of interest (e.g., dialogue, summarization, etc.), to acquire a model $\pi^{\mathrm{SFT}}$.
This step is referred to as the \textsc{SFT} phase. For instance, for training InstructGPT~\citep{RLHF2022},
GPT-3 \citep{GPT3} is first fine-tuned on the given input prompt distribution. The second stage of RLHF is known as reward modeling, i.e., researchers collect preferences $\mathcal{D}=(x,y_w,y_l)$ on the generations of fine-tuned model $\pi^{\mathrm{SFT}}$, and learns a reward model $r^*(x,y)$ that could represent the quality or the rating of generation $y$ with respect to prompt $x$. The final step is policy optimization on $\pi_{\mathrm{SFT}}=\pi_{\mathrm{ref}}$, by maximizing a regularized reward to obtain the optimal policy model $\pi^*$ through reinforcement learning: 
\begin{equation}
\label{RLHF objective main}
\begin{aligned}
& \max_{\theta} \mathbb{E}_{x \sim \mathcal{D}}\left[\mathbb{E}_{y \sim \pi_{\theta} (y \mid x)}\left[r^*(x, y)\right]-\beta \mathrm{KL}\left(\pi_{\theta} (\cdot\mid x) \| \pi_{\mathrm{ref}}(\cdot \mid x)\right)\right], \\
\end{aligned}
\vspace{-2pt}
\end{equation}
in which $\beta>0$ denotes the regularization constant. For ease of reference, we prvide more detailed description of RLHF in Appendix~\ref{subsection:rlhf}, and we also add a table of notations in Table~\ref{tab:table-of-notations} in Appendix~\ref{sec:misc}. 

\subsection{Direct Preference Optimization (DPO)}
One disadvantage of RLHF is that the RL step often requires substantial computational effort (e.g., to carry out PPO). The idea of DPO is to combine the reward model and RL in RLHF into a single objective, bypassing the computation in the RL step. Given the same preference pairs $\mathcal{D}=(x,y_w,y_l)$ utilized for reward modeling in RLHF, the DPO objective yields:
\begin{equation}
\label{DPO objective}
\min_{\theta} \mathcal{L}_{\mathrm{DPO}}\left(\pi_{\theta} ; \pi_{\mathrm{ref}}\right)
:=-\mathbb{E}_{\left(x, y_w, y_l\right) \sim \mathcal{D}}\left[\log \sigma\left(\beta \log \frac{\pi_{\theta} \left(y_w \mid x\right)}{\pi_{\text {ref }}\left(y_w \mid x\right)}-\beta \log \frac{\pi_{\theta} \left(y_l \mid x\right)}{\pi_{\text {ref }}\left(y_l \mid x\right)}\right)\right],
\end{equation}
where $\sigma(\cdot)$ is the sigmoid function and $\beta$ is a regularization parameter similar to the one in RLHF. DPO thus yields a supervised learning problem, and requires much less computation than the RL based RLHF. The objective in Equation \ref{DPO objective} can be understood as maximizing the likelihood difference between the preference pairs under the policy, encouraging the model to more likely generate the preferred answers than non-preferred. We refer more details of DPO derivation in Appendix \ref{subsection:DPO}.

\label{sc3}
\subsection{Motivation: Revisiting $\xpos$}
Since DPO is proposed, there is huge interest in developing and improving DPO, leading to numerous $\xpos$. Different $\xpos$ can be motivated by theoretical concerns like relaxing or extending preference distribution assumptions in IPO and MallowsPO, human aware loss function in KTO, or from practical aspects like reference model-free alternatives, like CPO, ORPO and SimPO. We provide an non-exhaustive list in Table \ref{xpos-table} in Appendix \ref{sec:misc} for the ease of revisiting and comparison.

Despite their differing motivations, $\xpos$ share a primary objective to optimize. We thus take the loss objectives as the first class citizen, and mathematically understand the parts that are commonly adopted or differ in $\xpos$. Before going into detailed categorization, we want to first argue that, in existing preference optimization literature, there is a lack of comprehensive studies on revisiting and examining DPO variants in their mathmatical objectives. As a consequence, some papers may have implicitly proposed some designs for improvement and even didn't highlight it. As a motivating example, we revisit ORPO objective, which proposes to maximize an odd ratio difference (for an event A with probability $p$, the odds ratio is defined as $p/(1-p)$) between the winning and losing answers: 
\vspace{-5pt}
\begin{equation}
\mathcal{L}_{\mathrm{ORPO}}\left(\pi_{\theta}\right)
=-\mathbb{E}[ 
\underbrace{\log p_\theta(y_w|x)}_{\mathcal{L}_\text{SFT}\left(\pi_{\theta}\right)} + 
\underbrace{\lambda  \log \sigma \left(\log \frac{p_\theta(y_w|x)}{1 - p_\theta(y_w|x)} - \log \frac{p_\theta(y_l|x)}{1 - p_\theta(y_l|x)}  \right)}_{\lambda\cdot\mathcal{L}_\text{PO}\left(\pi_{\theta}\right)}],
\vspace{-7pt}
\end{equation}
in which the expectation is for $\left(x, y_w, y_l\right) \sim \mathcal{D}$, and $p_\theta(y|x) = \exp\left( \frac{1}{|y|} \log \pi_\theta(y|x) \right)$. Rewriting terms in $\mathcal{L}_\text{PO}\left(\pi_{\theta}\right)$, we could derive a upper bound as (see proof in Appendix \ref{proof of ORPO upper bound}):
\begin{equation}
\label{ORPO upper bound}
\mathcal{L}_{\mathrm{PO}}\left(\pi_{\theta}\right) \leq -\log \sigma (\frac{1}{1-p_\theta(y_l|x)} \underbrace{\left(\frac{1}{|y_w|}\log \pi_\theta(y_w|x) - \frac{1}{|y_l|} \log \pi_\theta(y_l|x)\right)}_{\Delta_{\theta}}) := \bar{\mathcal{L}}_{\text{PO}}\left(\pi_{\theta}\right),
\vspace{-5pt}
\end{equation}
if assuming $\Delta_{\theta}>0$ for all $x$. The upper bound $\bar{\mathcal{L}}_{\text{PO}}$ is sharp, as $\bar{\mathcal{L}}_{\text{PO}}-\mathcal{L}_\text{PO} = \mathcal{O}(\Delta_{\theta})^2$; thus minimizing ORPO loss could be understood as {\it reference-model free} DPO with {\it length normalization} (namely $1/|y_w|$ and $1/|y_l|$, see more detailed explanation in Section~\ref{section:rainbowpo}) and a {\it contextual dependent $\beta(x) = 1/(1-p_\theta(y_l|x))$}. Length normalization is one of the key ideas adopted in SimPO:
\begin{equation}
\label{SimPO objective}
\mathcal{L}_{\mathrm{SimPO}}\left(\pi_{\theta} ; \gamma\right)
:=-\mathbb{E}_{\left(x, y_w, y_l\right) \sim \mathcal{D}}\left[\log \sigma  \left( \frac{\beta}{|y_w|} \log \pi_\theta(y_w|x) - \frac{\beta}{|y_l|} \log \pi_\theta(y_l|x) - \gamma \right)\right],
\end{equation}
which is evident in SimPO objective, though proposed after ORPO. This connection is however unaware by the literature. This thus calls for a comprehensive analysis of the contributed elements in different $\xpos$ so far, as many methods may overlap in contributed directions without awareness, and bringing this out right away could possibly prevent 
repetitive work or efforts in the future.

Following similar analysis of different representative XPO methods for pairwise preferences, including DPO~\citep{DPO}, IPO~\citep{IPO}, CPO~\citep{CPO}, GPO~\citep{GPO}, RSO~\citep{RSO}, ODPO~\citep{ODPO}, ORPO~\citep{ORPO}, MallowsPO~\citep{chen2024mallows}, SimPO~\citep{SimPO}, we come up with seven broad categories, which is able to explain most popular DPO variants in the literature, as in Table \ref{xpos-decomposition}, This provides a straightforward illustration of the main ideas and connections of existing methods. The meanings and details of the categories are elaborated in Section \ref{section:rainbowpo}. 

\begin{table}[!th]
\centering
\resizebox{\textwidth}{!}{
\begin{tabular}{lcccccccccccc}
\toprule 

\textbf{Method} & \textbf{Length Norm.} & \textbf{Link Func}. & \textbf{Home Adv.} & \textbf{Ref. Policy} & \textbf{Contextual Scaling} & \textbf{RS} & \textbf{SFT Loss} \\
\midrule 
DPO & $\times$ & logistic & $\times$ & SFT & $\times$ & $\times$ & $\times$\\
SLiC-HF & $\times$ & hinge & $\times$ & SFT & $\times$ & $\times$ & $\checkmark$\\
IPO & $\times$ & square & $\times$ & SFT & $\times$ & $\times$ & $\times$\\
CPO & $\times$ & logistic & $\times$ & Free & $\times$ & $\times$ & $\checkmark$\\
RSO & $\times$ & logistic / hinge & $\times$ & SFT & $\times$ & $\checkmark$ & $\times$\\
ODPO & $\times$ & logistic & $\checkmark$ & SFT & $\times$ & $\times$ & $\times$\\
ORPO & $\checkmark$ & logistic & $\times$ & Free & implicitly & $\times$ & $\checkmark$\\
WPO & $\times$ & logistic & $\times$ & SFT &  $\checkmark$ & $\times$ & $\times$\\
MallowsPO & $\times$ & logistic & $\times$ & SFT &  $\checkmark$ & $\times$ & $\times$\\
SimPO   & $\checkmark$ & logistic & $\checkmark$ & Free &  $\times$ & $\times$ & $\times$ \\
\midrule
\coloredmethodname  & $\checkmark$ & logistic & $\times$ & mixing &  $\checkmark$ & $\times$ & $\times$ \\
\bottomrule
\end{tabular}
}
\caption{Mapping of $\xpos$ with mathematically orthogonal components (see more details in Appendix \ref{XPOs categorization}) and validation results of their effectiveness by the downstream task evaluations.}
\label{xpos-decomposition}  
\end{table}

\section{$\benchmarknameonly$: A Unified Framework}
\label{section:rainbowpo}
\subsection{Component Descriptions}
\label{subsec: components}
We first explain in detail about the components we categorized, after which we propose a generic framework, which we name as RainbowPO, to combine these components. 
\vspace{-5pt}
\paragraph{Length Normalization.} The literature has noticed a verbosity issue of DPO aligned models, as the aligned model may generate answers significantly longer than both preferred and rejected answers \citep{park2024disentangling}. This also could lead to an inflated win wate when evaluating model performance, as LLM-as-a-judge can be susceptible to length bias: \cite{wang2023far} has noticed that when evaluating 13B parameter models in head-to-head comparisons with the Davinci-003 model, win rates have a strong correlation (0.96) with the average number of unique tokens in the
model’s response. To address this issue, one promising direction noticed in the literature is to incorporate explicit length penalties, like in R-DPO \citep{park2024disentangling} and SimPO \citep{SimPO}:
\begin{align}
r^{\text{LR}}_{\theta}(x,y) = r_{\theta}(x,y) - \alpha |y|,\, \text{ and } \, r^{\text{LN}}_{\theta}(x,y) = \frac{1}{|y|}r_{\theta}(x,y),
\end{align}
in which $r_{\theta}(x,y)=\log \frac{\pi_\theta(y|x)}{\pi_{\text{ref}}(y|x)}$ is the implicit reward model \citep{DPO}. From an optimization perspective, maximization with respect to $r^{\text{LN}}_{\theta}(x,y)$ is equivalent to $r^{\text{LR}}_{\theta}(x,y)$ with a specific $\alpha$ (might be prompt $x$ dependent). Directly applied to DPO, the resulting objective yields:
\begin{equation}
\label{LN-DPO}
\mathcal{L}_{\mathrm{LN-DPO}}\left(\pi_{\theta} ; \pi_{\text{ref}}\right) := -\underset{\left(x, y_w, y_l\right) \sim \mathcal{D}}{\mathbb{E}}\log \sigmoid\left(\frac{\beta }{|y_w|}\log \frac{\pi_{\theta} \left(y_w \mid x\right)}{\pi_{\text{ref}}\left(y_w \mid x\right)}-\frac{\beta }{|y_l|}\log \frac{\pi_{\theta} \left(y_l \mid x\right)}{\pi_{\text{ref}}\left(y_l \mid x\right)} \right).
\end{equation}
Why length normalization could help prevent the verbosity issues can be explained through examining the gradient of the loss respectively: $\nabla_\theta \mathcal{L}_{\mathrm{LN-DPO}}\left(\pi_\theta;\pi_{\text{ref}}\right) =$
\begin{multline}
-\beta \mathbb{E}\left[\sigma\left(r^{\text{LN}}_\theta\left(x, y_l\right)-r^{\text{LN}}_\theta\left(x, y_w\right)\right)\left(\frac{1}{|y_w|}\nabla_\theta \log \pi_{\theta}\left(y_w \mid x\right)-\frac{1}{|y_l|}\nabla_\theta \log \pi_{\theta}\left(y_l \mid x\right)\right)\right],
\end{multline}
thus the gradient of length normalized DPO can be understood as taking a discount factor $\frac{1}{|y_w|}$ of the length for longer sequence. We also empirically justify the effectiveness of length normalization by comparing to the vanilla DPO trained models, and witness the consistent smaller average length, independent of the regularization constant $\beta$. See results of average length in Section \ref{section:experiments}.

\paragraph{Link Function.} SLiC-HF \citep{zhao2023slic} and GPO \citep{GPO} both realized that the DPO objective could be understood as taking $f$ (we refer this as {\it link function}) as $-\log \sigmoid (\cdot)$ in:
\begin{align}
\mathcal{L}_{\text{GPO}}= \underset{\left(x, y_w, y_l\right) \sim \mathcal{D}}{\mathbb{E}} \left[f\left( \beta \log \frac{\pi_\theta(y_w|x)}{\pi_{\text{ref}}(y_w|x)} - \beta \log \frac{\pi_\theta(y_l|x)}{\pi_{\text{ref}}(y_l|x)}\right)\right],
\end{align}
thus unifying DPO, IPO, SLiC (without SFT loss) as instances of taking $f$ as logistic $-\log\sigma(\cdot)$, square $(x-1/2)^2$ or hinge function $\max(0,\delta-x)$ for some margin $\delta>0$ (see more details in Appendix \ref{XPOs categorization}), separately. For identifying the best link function, we did an exclusive parameter search for DPO and IPO separately, however we found DPO, i.e. adopting $-\log\sigma(\cdot)$ as the link function, empirically performs better than IPO evaluated by AlpacaEval2, despite the weaker theoretical assumption of preferences in IPO. Thus we stick to the link function to be $-\log\sigma(\cdot)$ in this paper.

\paragraph{Home Advantage\,/\,Margin.}
In SliC, IPO, ORPO, SimPO, there exists a term which also targets at encouraging the difference between the reward model difference. It is also referred in SimPO as the term of home advantage $\gamma$ (the terminology comes from an extension of the vanilla Bradley-Terry Model): $\operatorname{logit}(\operatorname{Prob}(i \text { beats } j))=r_i-r_j-\gamma.$ Thus the likelihood could be written as:
\begin{align}
\label{eq:BT with hadv}
p^*\left(y_1 \succ y_2 \mid x\right)
&= \sigma\left(r^*\left(x, y_1\right) - r^*\left(x, y_2\right) -\gamma\right),
\end{align}
which takes the losing prompt in a home advantage when $\gamma>0$. SimPO shows the effectiveness of this margin under the reference-free setup; however, when we adopt the margin for vanilla DPO (i.e. with the reference policy) with the optimal $\beta$, we do not witness an increase of the performance when adjusting the margin, either further adopting DPO with length normalization or not. In Figure \ref{effects_of_margin}, the performance steadily decreases when increasing the margin $\gamma$ in DPO, i.e. we optimize:
\begin{equation}
\label{DPO objective with margin}
\mathcal{L}_{\mathrm{DPO+}}\left(\pi_{\theta} ; \pi_{\mathrm{ref}},\gamma\right)
:=-\mathbb{E}_{\left(x, y_w, y_l\right) \sim \mathcal{D}}\left[\log \sigma\left(\beta \log \frac{\pi_{\theta} \left(y_w \mid x\right)}{\pi_{\text {ref }}\left(y_w \mid x\right)}-\beta \log \frac{\pi_{\theta} \left(y_l \mid x\right)}{\pi_{\text {ref }}\left(y_l \mid x\right)}-\gamma\right)\right],
\end{equation}
This questions the true explanation about the effectiveness of the margin term in SimPO. We provide the answer as understanding margin as a reference policy right in the next paragraph.

\paragraph{Reference Policy.} DPO takes the SFT policy as the reference policy motivated by the standard RLHF pipeline. However, recently proposed methods like CPO, ORPO, and SimPO \citep{CPO,ORPO,SimPO} all suggested a reference-free objective could yield the same or even better performance. CPO and ORPO further utilized an extra SFT loss to force regularization, while for SimPO, such regularization is not enforced. Given our prior examination that home advantage can hardly improve over DPO, we argue that \textit{the margin term in} SimPO loss Eq. \ref{SimPO objective} \textit{should be understood as a term for ``reference policy" instead of the ``home advantage".} 

Concretely, we could hypothesize that there exists a ``good policy" $\pi_{\gamma}$ such that, for each prompt and preference pairs in the dataset, the normalized log likelihood ratio of preferred response to non-preferred response is a positive constant, which we denote as $\pi_{\gamma}$. We assume that $\pi_{\gamma}$'s normalized implicit reward model is perfect at in-distribution pairwise classification and yields $\frac{\pi_{\gamma} \left(y_w \mid x\right)^{1/|y_w|}}{\pi_{\gamma}\left(y_l \mid x\right)^{1/|y_l|}}=\exp(\gamma)$ for any $x$. If so, the loss of SimPO (defined as in Equation \ref{SimPO objective}) could be rewritten as:
\begin{equation}
\begin{aligned}
\mathcal{L}_{\mathrm{SimPO}}\left(\pi_{\theta} ; \gamma\right) \equiv -\underset{\left(x, y_w, y_l\right) \sim \mathcal{D}}{\mathbb{E}}\log \sigmoid\left(\frac{\beta }{|y_w|}\log \frac{\pi_{\theta} \left(y_w \mid x\right)}{\pi_{\gamma}\left(y_w \mid x\right)}-\frac{\beta }{|y_l|}\log \frac{\pi_{\theta} \left(y_l \mid x\right)}{\pi_{\gamma}\left(y_l \mid x\right)} \right).
\end{aligned}
\end{equation}

This transformation motivates us to further propose a new mechanism which we call as {\it mixing (reference) policy}. If taking $\pi_{\text{sft}}$ as the reference policy is too conservative (not strong enough), and taking $\pi_{\gamma}$ policy can help improve the performance but totally neglects the original SFT model implicit preference information, can we benefit from a mixing of these two policies? The answer is YES. Consider a exponential mixing of likelihood of reference policy $\pi_{\gamma}$ and $\pi_{\text{ref}}$ (we use $\pi_{\alpha,\gamma}$ to denote the relevance of the resulting reference policy to both $\alpha$ and $\gamma$) with $\alpha\in [0,1]$, defined as:
\begin{equation}
\label{def pi_alpha}
\pi_{\alpha,\gamma} (y \mid x) \propto \pi^{\alpha}_{\text{ref}} (y \mid x) \cdot \pi^{1-\alpha}_{\gamma} (y \mid x).
\end{equation}
Then if we use $\pi_{\alpha,\gamma}$ as the reference policy in (\ref{LN-DPO}), we yield $\mathcal{L}_{\text{LN-DPO}}(\pi_{\theta};\pi_{\alpha,\gamma})$ with a practical form:
\begin{equation}
\label{DPO-LN-mixing}
-\underset{\left(x, y_w, y_l\right) \sim \mathcal{D}}{\mathbb{E}}\log \sigma  \left(\beta \log \frac{\pi_{\theta} \left(y_w \mid x\right)^{1/|y_w|}}{\pi_{\theta}\left(y_l \mid x\right)^{1/|y_l|}}-\alpha \beta \log \frac{\pi_{\text {ref }} \left(y_w \mid x\right)^{1/|y_w|}}{\pi_{\text {ref }}\left(y_l \mid x\right)^{1/|y_l|}}-(1-\alpha)\gamma\right).
\end{equation}
Notice that $\mathcal{L}_{\text{LN-DPO}}(\pi_{\theta};\pi_{\alpha,\gamma}) = \mathcal{L}_{\text{LN-DPO}}(\pi_{\theta};\pi_{0,\gamma})= \mathcal{L}_{\text{SimPO}}(\pi_{\theta};\gamma)$, thus SimPO is an instance of mixing policy by taking $\alpha=0$; $\mathcal{L}_{\text{LN-DPO}}(\pi_{\theta};\pi_{1,\gamma})=\mathcal{L}_{\text{LN-DPO}}(\pi_{\theta})$, thus $\alpha=1$ corresponds to DPO applied with length normalization as in Equation \ref{LN-DPO}. For finding a good $\pi_{\alpha,\gamma}$, we first take $\alpha = 0$ and tune the best $\gamma$, which is similar to SimPO; we then afterwards tune $\alpha$ using obtained $\gamma$.

According to our experiment results, we indeed find that there exists $\alpha\in(0,1)$ that performs better than both sides (i.e. $\alpha=0$ or $\alpha=1$), see Figure \ref{effects_of_mixing}. Recent work \citep{liu2024understanding} analyze the role of reference model, and argue that stronger reference model could benefit DPO; our finding is consistent, as we further explicitly design a choice of better reference model for better performance.

\paragraph{Rejection Sampling.}
Since the proposal of DPO, there is controversy on the exact equivalence of DPO and RLHF. RSO \citep{RSO}, further pointed out that the data should be generated from the optimal policy if treating DPO objective as maximum likelihood estimation. Thus RSO adopts a statistical rejection sampling for sampling preference dataset $\mathcal{D}$ generated by the optimal policy to mitigate this distribution difference in DPO. 

\begin{wrapfigure}[15]{r}{0.5\textwidth}  
    \vspace{-20pt}  
    \begin{minipage}{0.5\textwidth}
    \begin{algorithm}[H]
    \begin{algorithmic}
    \small
        \State  For each prompt $x$, start with an empty set $\mathcal{Y} \gets \{\}$.
        \State  Generate $N \gg M$ answers $y_i \sim \pi_{\text{sft}}(y \mid x)$, for $i \leq N$ as candidates.
        \State Compute each $y_i$'s percentile $\mathcal{P}_i(x)$ based on $r(x,y_i)$ over the whole $N$ answers for prompt $x$.
        \State Initialize counting number $j=0$.
        \While{$|\mathcal{Y}|<M$}
            \State $j = j + 1$ and generate $u \sim U[0,1]$
            \If{$u \leq \exp((\mathcal{P}_i - 1)/\tau)$}
                \State Accept $y_i$ and add it to $\mathcal{Y}$.
            \Else 
                \State Reject $y_i$.
            \EndIf
        \EndWhile
        \State Let $y_w = \arg\max_{y \in \mathcal{Y}} r(x,y)$
        \State Let $y_l = \arg\min_{y \in \mathcal{Y}} r(x,y)$
    \end{algorithmic}
    \caption{RS$^{+}$ for preferences formulation.}
    \label{Alg:RSO}
    \end{algorithm}
    \end{minipage}
    \vspace{-12pt}  
\end{wrapfigure}

To address the intrinstic different variance schedules of reward model for different prompts, and stablize the process for formulating the preference dataset, we also adopt a modified version of RSO by computing the percentile reward (or the ranking reward) in the whole generation set instead of utilizing the true reward, which we found that can stabilize the generation and yield better results when further applied with DPO, as in Algorithm \ref{Alg:RSO}. 

Similar to in RSO \citep{RSO}, we search the best temperature hyper-parameter for RSO through the downstream task performance as the validation metric, which we detail in Figure \ref{fig:RSO_demo}. We then use the empirically best performed temperature constant $\tau$ to formulate the prefer-\\
ence dataset as $\mathcal{D}_{\text{RS}}$.

\paragraph{Contextual Scaling.}
Existing work also considered the contextual difference: some preference pairs might be of higher uncertainty or have more {\it dispersion}. In this paper, we adopt the idea of MallowsPO in \cite{chen2024mallows} by introducing a contextual scaling factor $\phi(x)$ on the likelihood difference, 
which yields:
\begin{equation}
\mathcal{L}_{\text{MallowsPO}}(\pi_{\theta};\pi_{\text{ref}})=-\underset{\left(x, y_w, y_l\right) \sim \mathcal{D}}{\mathbb{E}}\log \sigma  \left( \phi(x)\left[\beta  \log \frac{\pi_\theta(y_w|x)}{\pi_{\text{ref}}(y_w|x)} - \beta  \log \frac{\pi_\theta(y_l|x)}{\pi_{\text{ref}}(y_l|x)}\right] \right),
\end{equation}
if adding this factor to DPO objective. The scaling factor is motivated by the Mallows ranking model which has a natural carrier of dispersion index. In MallowsPO, $\phi(x)$ corresponds to a normalized predictive entropy of the preference pair $(x,y_w,y_l)$ ($y_{w,i}$ and $y_{l,i}$ denote the $i^{\text{th}}$ component of $y_w$ and $y_l$):
\begin{equation}
\phi(x)=-\log \left(\frac{\sum_{i=1}^{N-1}\left[H_{\pi_{\mathrm{ref}}}\left(Y_{i+1} \mid Y_i=y_{w,i}\right)+H_{\pi_{\mathrm{ref}}}\left(Y_{i+1} \mid Y_i=y_{l,i}\right)\right]}{2 \log n}\right).
\vspace{-5pt}
\end{equation}

\paragraph{SFT Loss.} SFT loss is straightforward by adding extra SFT loss term on the winning answer, or a reference answer (for regularization, which appears for reference-free methods like CPO and ORPO). For example, if adding SFT for DPO with preferred/winning answers, we yield:
\begin{equation}
\label{DPO objective with SFT}
\mathcal{L}_{\text{SFT}}\left(\pi_{\theta} ;\lambda\right)
:=-\mathbb{E}\left[\log \sigma\left(\beta \log \frac{\pi_{\theta} \left(y_w \mid x\right)}{\pi_{\text {ref }}\left(y_w \mid x\right)}-\beta \log \frac{\pi_{\theta} \left(y_l \mid x\right)}{\pi_{\text {ref }}\left(y_l \mid x\right)}\right)+\lambda \log(\pi_{\theta} \left(y_w \mid x\right))\right],
\end{equation}
However, we find that adding SFT loss could largely degrade the performance, as in Section \ref{section:experiments}.

\subsection{Unified Formulation: Rainbow}
Combining the advances proposed above, we propose a following preference optimization objective, for which we refer our method as \benchmarknameonly:
\begin{equation}
\small
\label{RainbowPO objective}
\mathcal{L}_{\textsc{RainbowPO}}\left(\pi_{\theta} ; \pi_{\mathrm{ref}}\right)
:=-\underset{\left(x, y_w, y_l\right) \sim \mathcal{D}^*}{\mathbb{E}}f\left[\phi(x)  \left( \frac{\beta }{|y_w|^{\eta}}\log \frac{\pi_{\theta} \left(y_w \mid x\right)}{\pi_{\alpha,\gamma}\left(y_w \mid x\right)}-\frac{\beta }{|y_l|^{\eta}}\log \frac{\pi_{\theta} \left(y_l \mid x\right)}{\pi_{\alpha,\gamma}\left(y_l \mid x\right)} \right)\right],
\end{equation}
in which $\eta\in\{0,1\}$, and $\pi_{\alpha,\gamma}$ is as defined in Equation (\ref{def pi_alpha}). The preference dataset $\mathcal{D}^*$ can be $\mathcal{D}_{\text{RS}}$, which means that the preference dataset is formulated by by rejection sampling from the original dataset's prompts as mentioned in Algorithm \ref{Alg:RSO}, if we have access to a reward model's true value. If the reward model is black-box oracle, namely we cannot access the true reward value, we will always utilize the usual formulation way of preference dataset $\mathcal{D}$, detailed in the experiment section.

\textbf{Hyper-parameters}. Like all other $\xpos$, to achieve the best performance, RainbowPO can introduce an extensive amount of hyper-parameter search for the best performing $f$, $\alpha$, $\beta$, $\gamma$ and whether $\eta=1$. For efficient hyper-parameter search, we conducted a greedy search method with the help of our framework and decomposition of effective elements: we search for the best hyper-parameters for those that affects the performance in the most when we gradually add designs to the preference optimization methods. For example, when adding length normalization to the methods, we only search for the best hyper-parameter for the regularization parameter $\beta$, and will fix the learning rate and all the training args, which prevents the parameter searching space from exploding.

\section{Experiments}
\label{section:experiments}
To evaluate the performance of the $\xpos$ algorithms, we conducted extensive experiments on training models with various $\xpos$ configurations and compared their instruction-following capabilities.

\paragraph{Experimental Setup.} We choose Llama3-8B-Instruct\footnote{\url{https://huggingface.co/meta-llama/Meta-Llama-3-8B-Instruct}.} as our model base to fine tune, mainly because that aligning this widely adopted and flagship instruct model is of great interest to the whole community and meets the standard as a representative setup for alignment. It can also help mitigate the uncertainty from probably not perfectly supervised fine-tuned models. 


For evaluation metric, we use widely adopted benchmark AlpacaEval2, which is composed of 805 questions and evaluate the instruction following capability of the model. AlpacaEval2 evaluates models with the win rates (WR) of model generations against the reference/base answers generated by \texttt{GPT4-turbo}. The comparisons are by default annotated by a \texttt{GPT4-turbo} and the resulting WR has a 68.5\% consistency with human evaluation, according to official AlpacaEval2 website. Length Controlled (LC) Win Rate (WR) is a debiased version of the WR that control for the length of the outputs and increase the WR's correlation to Chat Arena,\footnote{\url{https://lmarena.ai/}.} while significantly decreasing the length gameability of the annotator. To cross validate the effectiveness of the model and mitigate possible bias of \texttt{GPT4}, we also adopt Llama3-70B instruct as the judge, which is reported to have a 67.5\% win rate consistency to humans (close to the performance of \texttt{GPT4}). We include more detailed background information of AlpacaEval2 in Appendix \ref{subsection:Alpaca}.

For formulating the preference dataset $\mathcal{D}$, we follow the standard RLHF pipeline by directly generating answers from the model (which is thus an on-policy dataset, but the algorithm is still offline) and get AI feedbacks as in SimPO \citep{SimPO}: we generate 5 answers from Llama3-8B-Instruct for each prompt in UltraFeedback \citep{cui2023ultrafeedback}, rank them with scores evaluated by ArmoRM \citep{wang2024interpretable},
and choose the best/worst one as winning/losing answer to form the preference pairs. For training, we adopted the popular library Transformer Reinforcement Learning (TRL),\footnote{\url{https://huggingface.co/docs/trl/index}.} which already implemented most aforementioned $\xpos$ algorithms and make everything under the same backend and easy to reproduce. If not specified, we train the model with 3 training epochs, which typically yields better performance for each $\xpos$ according to our replication.

\subsection{Effectiveness of Different Components}
\paragraph{Individual Components Results.}
We first study the effectiveness of adding individual components. We use $+$ to denote that only the component(s) after is added to DPO baseline as in Table \ref{tab:ablation-study}. From the results, we could notice that some components may not provide firm improvement over the baseline, no matter being added individually or combined. For example, for home advantage, we tune different values under the best performed $\beta$ for DPO, and also always witness a degradation in performances, see Figure \ref{effects_of_margin}. For link function, we examine the square loss in IPO and did not see performance gain over the DPO baseline. Other components (LN, Mixing reference policy, CS) indeed help improve the metric even added individually. Compared to SimPO, using mixing reference policy yields also better results as in Figure \ref{effects_of_mixing}. The average WR gain is reported in the last column.

\begin{table*}[!htbp]
\centering
\resizebox{\textwidth}{!}{
    \begin{tabular}{l|cr|cr|cr|cr|c}
    \toprule 
    \textbf{Models}  & \multicolumn{4}{c}{\textbf{AlpacaEval2 (GPT4)}} & \multicolumn{4}{|c|}{\textbf{AlpacaEval2 (Llama3-70B)}} &\\ 
    & \multicolumn{1}{c}{LC WR (\%)} & $\Delta$ (\%) & \multicolumn{1}{c}{WR (\%)} & $\Delta$ (\%) & \multicolumn{1}{c}{LC WR (\%)} & $\Delta$ (\%) & \multicolumn{1}{c}{WR (\%)} & $\Delta$ (\%) & Avg. $\Delta$ (\%) \\ \midrule
    Base model & 41.88 & \multicolumn{1}{c|}{-} & 42.29 & \multicolumn{1}{c|}{-} & 57.78 & \multicolumn{1}{c|}{-} & 57.96 & \multicolumn{1}{c|}{-} & \multicolumn{1}{c}{-}\\
    $\quad$ + Length Norm. (LN) & 44.27 & 2.39 & 42.37 & 0.08 & 61.37 & 3.59 & 58.94 & 0.98 & \textcolor{green}{+\,1.76} \\
    $\quad$ + Ref. Policy Mixing (Mix) & 40.18  & -1.7  & 41.25& -1.04 & 60.67 & 2.89 & 57.95 & -0.01 & \textcolor{green}{+\,0.04 } \\
    $\quad$ + Contextual Scaling (CS) & 41.14 & -0.74 & 41.44 & -0.85 & 60.06 & 2.28 & 57.90 & -0.06 & \textcolor{green}{+\,0.16} \\
    $\quad$ + Link Function (LF) & 39.53 & -2.35 & 39.07 & -3.22 & 58.13 & 0.35 & 56.34 & -1.62 & \textcolor{red}{-\,1.21} \\
    $\quad$ + Home Advantage (HA) & 41.70 & -0.18 & 39.85 & -2.44 & 59.01 & 1.23 & 56.41 & -1.55 & \textcolor{red}{-\,0.74} \\
    $\quad$ + Rejection Sampling (RSO) & 42.87 & 0.99 & 42.50 & 0.21 & 58.86 & 1.08 & 56.02 & -1.94 & \textcolor{green}{+\,0.09} \\
    \midrule
    Base model + LN & 44.27 & \multicolumn{1}{c|}{-} & 42.37 & \multicolumn{1}{c|}{-} & 61.37 & \multicolumn{1}{c|}{-} & 58.94 & \multicolumn{1}{c|}{-} & \multicolumn{1}{c}{-} \\
    $\quad$ + LN + Mix & \textbf{47.45} & 3.18 & \textbf{45.89} & 3.52 & 61.91 & 0.54 & 58.07 & -0.87 & \textcolor{green}{+\,1.59} \\
   $\quad$ + LN + CS & 45.92 & 1.65 & 42.36 & -0.01 & 61.88 & 0.51 & 58.20 & -0.74 & \textcolor{green}{+\,0.35} \\
    $\quad$ + LN + HA & 42.77 & -1.50 & 41.38 & -0.99 & 60.99 & -0.38 & 59.78 & 0.84 & \textcolor{red}{-\,0.51} \\
    $\quad$ + LN + RS & 43.22 & -1.05 & 41.96 & -0.41 & 61.03 & -0.34 & 57.02 & -1.92 & \textcolor{red}{-\,0.93} \\
    $\quad$ + LN + SFT Loss & 39.90 & -4.37 & 38.66 & -3.71 & 60.42 & -0.95 & 58.94 & 0.00 & \textcolor{red}{-\,2.26} \\
    \bottomrule
    \end{tabular}
}
\caption{Model performance results on each component after model training for 3 epochs.}
\label{tab:ablation-study}
\vspace{-7 pt}
\end{table*}


\begin{figure}[!htbp]
    \centering
    \begin{subfigure}{0.32\textwidth}
        \centering
        \includegraphics[width=1\linewidth]{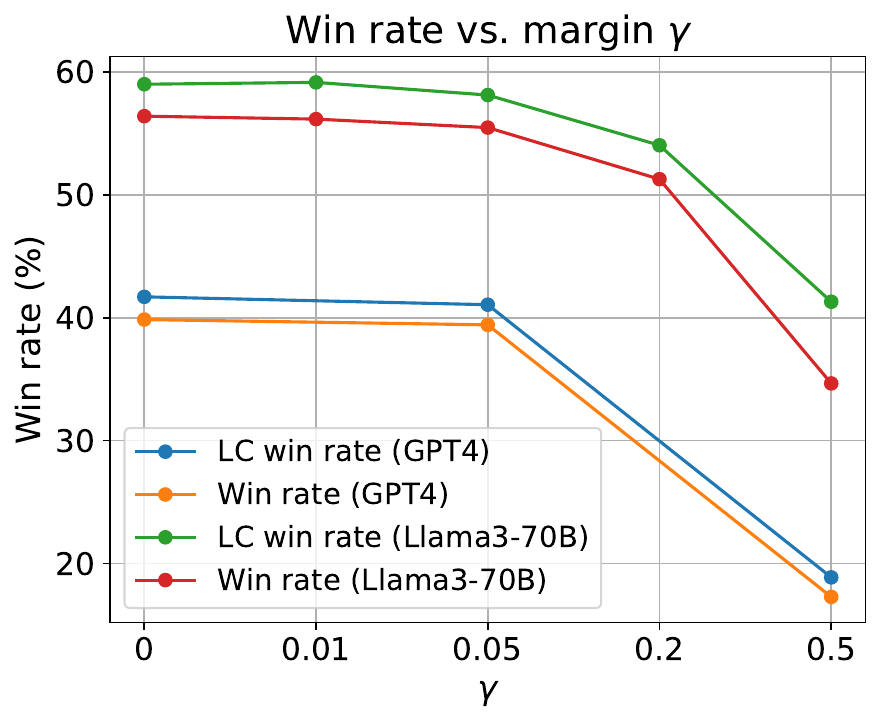}
        \caption{Effects of Home Advantage / Margin.}
        \label{effects_of_margin}
    \end{subfigure}
    \hfill
    \begin{subfigure}{0.32\textwidth}
        \centering
        \includegraphics[width=1\linewidth]{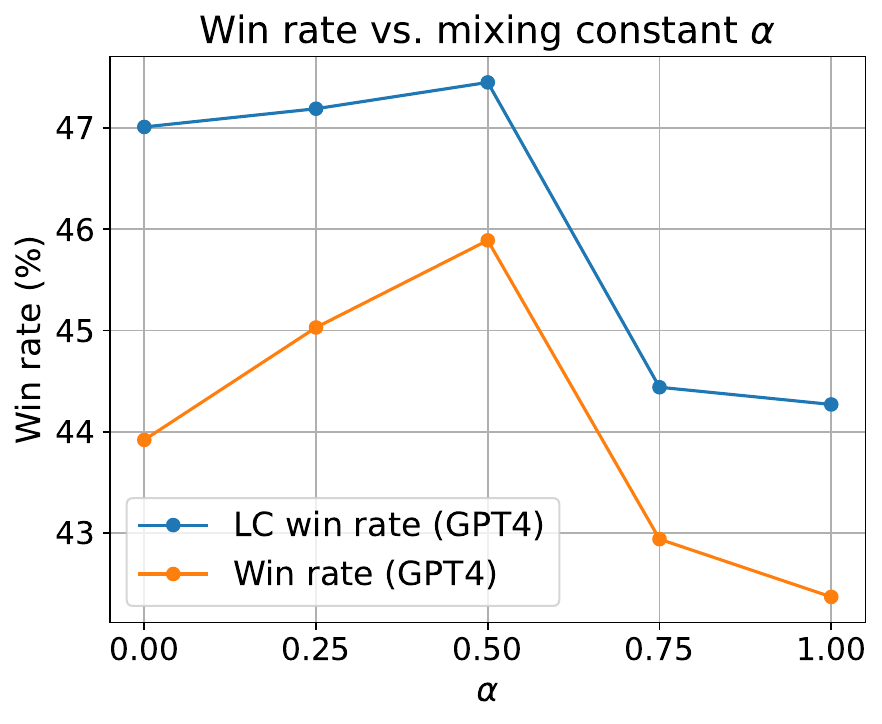}
        \caption{Effects of Reference Policy Mixing.}
         \label{effects_of_mixing}
    \end{subfigure}
    \hfill
    \begin{subfigure}{0.32\textwidth}
        \centering
        \includegraphics[width=1\linewidth]{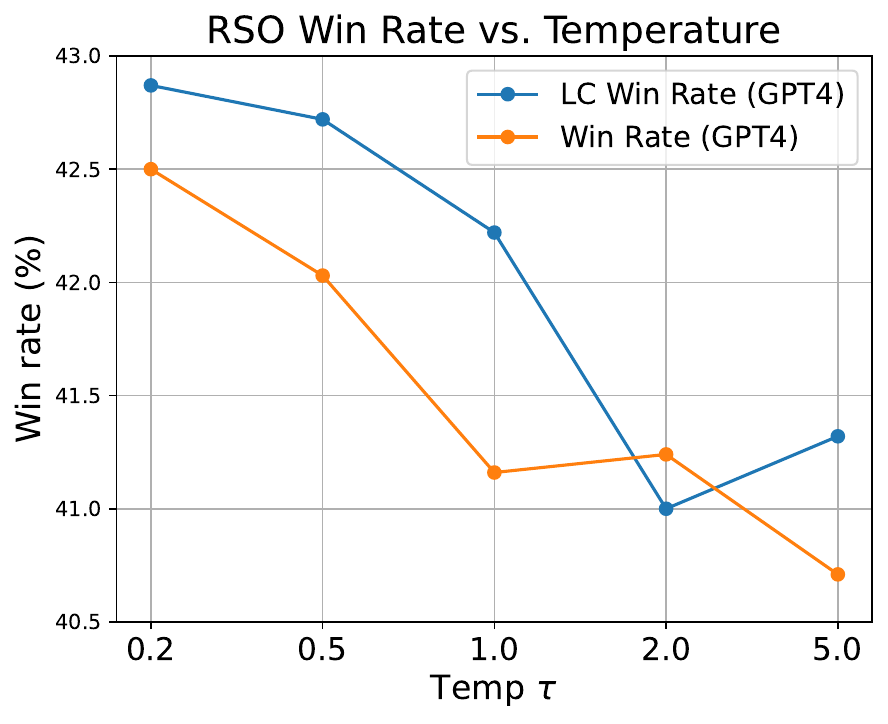}
        \caption{Performance Difference for different temperature $\tau$ in RSO.}
        \label{fig:RSO_demo}
    \end{subfigure}
    \caption{Dynamics of changing home adv., reference policy mixing and different temp. in RSO.}
    \vspace{-10pt}
\end{figure}

\paragraph{Components Combination Results.} Given the effectiveness of length normalization, we further test the combination of LN and other components. We do find that mixing policy could help improve the performance much more remarkablly when combined with LN-DPO than just DPO: it provides a 1.6\% win rate extra gain compared to only 0.04\% on DPO. However, we found that the RSO can improve DPO, but will yield worse performance when applied with length normalization. Thus, we do find that despite that these elements are apparently mathematically orthogonal, they are {\it not empirically independent}. Given the positive results and effectiveness of length normalization, mixing reference policy and contextual scaling, we propose \coloredmethodname, as the combination of these three elements. We then examine the effectiveness of our method and each elements by gradually combining the elements one by one and greedy search of the best hyper-parameters. We finally achieve a 51.66\% win rate for AlpacaEval2, surpassing the GPT4-1106 preview (see details in Appendix \ref{app:training details}).

\begin{table*}[!htbp]
\centering
\resizebox{\textwidth}{!}{
    \begin{tabular}{l|cccc|cc|c}
    \toprule 
    \textbf{Models}  & \multicolumn{4}{c}{\textbf{AlpacaEval2 (GPT4)}} & \multicolumn{2}{|c|}{\textbf{AlpacaEval2 (Llama3-70B)}} & \\ 
    \coloredmethodname & \multicolumn{1}{c}{LC WR (\%)} & $\sigma$ & \multicolumn{1}{c}{WR (\%)} & $\sigma$ & \multicolumn{1}{c}{LC WR (\%)} & \multicolumn{1}{c|}{WR (\%)}  & avg length ($\downarrow$)\\ \midrule
    Base model & 41.88 & 0.77 & 42.29 & 1.46 & 57.78& 57.96 & 2,169\\
    $\quad$ $\oplus$ Length Norm. & 44.27 & 0.75 & 42.37 & 1.45 & 61.37 & 58.94 & 1,942\\ 
    $\quad$ $\oplus$ Ref. Policy Mixing & 47.45 & 0.70  & 45.89 & 1.49  & 61.91  & 58.07 & 1890\\ 
    $\quad$ $\oplus$ Warm-up Adjustment & 48.52 & 0.80 & 45.88 &  1.45 & 63.37 &  \textbf{59.95} & 1,919\\ 
    $\quad$ $\oplus$ Contextual Scaling & \textbf{51.66} & 0.78 & \textbf{47.92} & 1.49 & \textbf{63.94}  & 59.69 & \textbf{1,878} \\
    \bottomrule
    
    \end{tabular}
}
\caption{Evaluation of $\benchmarknameonly$ by adding new components consecutively.} 
\label{tab:rainbowpo without rm}
\vspace{-10pt}
\end{table*}

\paragraph{Ablations on RainbowPO.} We also conduct an ablation study of our proposed $\benchmarknameonly$ algorithm. All components of our proposed in our algorithm is useful, as in Table \ref{tab:rainbowpo without rm ablation}, for which we use $\oplus$ to denote that the methods are based on composition of the method on previous line and new elements in Table \ref{tab:rainbowpo without rm ablation}. We notice that adding length normalization is indeed important and of the most critical importance among the components for RainbowPO. We also include ablations on training epochs in Appendix \ref{app:training epochs ablations.}, which showcases that 3 epochs yield the sweet spot.

\begin{table*}[!htbp]
\centering
\resizebox{\textwidth}{!}{
    \begin{tabular}{l|cccc|cc|c}
    \toprule 
    \textbf{Models}  & \multicolumn{4}{c|}{\textbf{AlpacaEval2 (GPT4)}} & \multicolumn{2}{c|}{\textbf{AlpacaEval2 (Llama3-70B)}} & \\ 
    & \multicolumn{1}{c}{LC WR (\%)} & $\sigma$ & \multicolumn{1}{c}{WR (\%)} & $\sigma$ & \multicolumn{1}{c}{LC WR (\%)} & \multicolumn{1}{c|}{WR (\%)}  & avg length ($\downarrow$)\\ \midrule
    \coloredmethodname & \textbf{51.66} & 0.78 & \textbf{47.92} & 1.49 & 63.94  & 59.69 & 1,878 \\ 
    $\quad$ $-$ Ref. Policy Mixing & 50.52 & 0.78 & 47.49 & 1.46 & 64.64 & 60.43 & 1,886\\ 
    $\quad$ $-$ Contextual Scaling  & 48.52 & 0.80 & 45.88 & 1.45 & 63.37 & 59.95 &  1,919 \\
    $\quad$ $-$ Length Normalization & 45.68 & 0.78  & 42.43 & 1.47 & 57.43 & 58.01 &  2108 \\ 
    \bottomrule
    
    \end{tabular}
}
\caption{Ablation study of the newly proposed elements in \benchmarknameonly.}
\label{tab:rainbowpo without rm ablation}
\vspace{-10pt}
\end{table*}

\subsection{Comparison with Baseline Methods}
Table~\ref{tab:baseline-results-three-epoch} shows the comparison between \coloredmethodname \text{ }with the baselines. For a fair comparsion, we first compare RainbowPO with the baselines in one training epoch, shown in Table~\ref{tab:baseline-results-one-epoch}. RainbowPO performs the best, beating SimPO while achieving lower average length. 

\begin{table*}[!htbp]
\centering
\resizebox{\textwidth}{!}{
    \begin{tabular}{l|ccc|cc|c}
    \toprule 
    \textbf{Models}  & \multicolumn{3}{c|}{\textbf{AlpacaEval2 (GPT4)}} & \multicolumn{2}{c|}{\textbf{AlpacaEval2 (Llama3-70B)}}\\ 
    & \multicolumn{1}{c}{LC WR (\%)} & \multicolumn{1}{c}{WR (\%)} & $\sigma$ & \multicolumn{1}{c}{LC WR (\%)} & \multicolumn{1}{c|}{WR (\%)}  & avg length ($\downarrow$)\\ \midrule
    DPO~\citep{DPO} & 37.95 & 37.36 & 1.42 & 55.46 & 54.03 & 1,989 \\ 
    IPO~\citep{IPO} & 34.80 & 34.52 & 1.40 & 52.67 & 50.93 & 1,956 \\
    KTO~\citep{KTO} & 35.61 & 33.19 & 1.38 & 55.94 & 51.74 & 1,876\\
    CPO~\citep{CPO} &31.89  & 34.92 & 1.38 & 53.33 & 54.84 & 2,155\\
    ORPO~\citep{ORPO} & 22.91&22.59 &1.24 & 48.41 & 45.90 & 1,914\\
    SimPO~\citep{SimPO} & 47.96 & 41.17 & 1.44 & \textbf{61.94} & 54.22 & 1,730\\ 
    
    \midrule
    \coloredmethodname ~(1 epoch)& \textbf{48.08} & \textbf{42.53} & 1.43 & 61.36 &  \textbf{54.60} & \textbf{1,683}\\
    \bottomrule
    \end{tabular}
}
\caption{Methods comparison under one training epoch. 
}
\label{tab:baseline-results-one-epoch}
\vspace{-5pt}
\end{table*}

When we increased the training epoch to 3, interestingly, we also noticed that the same phenomenon as what \citep{SimPO} reported: SimPO rarely benefits from more epochs of training. However, RainbowPO and DPO both gets an increase in the winning rate after another two epochs of training, making the RainbowPO get a 51.66\% win rate against GPT4 under GPT4 as a judge. This advantage not only benefits the final performance, but also might play larger impact when the alignment dataset is small or expensive to collect and will be beneficial to reuse, which is quite common in reality.

\begin{table*}[!htbp]
\centering
\resizebox{\textwidth}{!}{
    \begin{tabular}{l|ccc|cc|c}
    \toprule 
    \textbf{Models}  & \multicolumn{3}{c|}{\textbf{AlpacaEval2 (GPT4)}} & \multicolumn{2}{c|}{\textbf{AlpacaEval2 (Llama3-70B)}}\\ 
    & \multicolumn{1}{c}{LC WR (\%)} & \multicolumn{1}{c}{WR (\%)} & $\sigma$ & \multicolumn{1}{c}{LC WR (\%)} & \multicolumn{1}{c|}{WR (\%)}  & avg length ($\downarrow$)\\ \midrule
    DPO$^{*}$~\citep{DPO} & 43.65 & 43.94 & 1.46 & 60.13& 58.20 & 2,284 \\ 
    SimPO$^{*}$~\citep{SimPO} & 48.40 & 44.57 & 1.47 & 60.90 & 56.46 &  \textbf{1,843}\\ 
    \midrule
    \coloredmethodname ~(3 epochs)&  \textbf{51.66} & \textbf{47.92} & 1.49 & \textbf{63.94} & \textbf{59.69} & 1,878\\
    \bottomrule
    \end{tabular}
}
\caption{Methods comparison under three training epochs. $^*$Hyper-parameters are further adjusted for the best performance.
}
\label{tab:baseline-results-three-epoch}
\vspace{-10pt}
\end{table*}



\subsection{Limitations and Future Work}
\paragraph{Broader tasks.} In this paper, we focus our evaluation on models trained with LLama3-8B Instruct as the base model. Exploring other models of varying sizes, such as Gemma \citep{team2024gemma} or Mistral \citep{jiang2023mistral}, could possibly enhance the generalizability of our findings. It will also be beneficial if we could repeat the pipelines and compare the algorithms' performance on other LLM evaluation metrics, like arena-hard or MT-bench, though MT-bench is known to be less tinguishable for RLHF algorithms. Other directions include benchmarking the effectiveness of alignment algorithms on improving other capabilities of LLM other than instruction following, like reasoning \citep{xiong2024building} or COT \citep{choi2024robust}. However, due to constraints in computing resources and time, we defer this investigation to future work. Nevertheless, we believe that our work provides a unified and comprehensive framework for helping to find the best preference optimization algorithms, and further pushing the boundary of offline RLHF for LLMs.
\vspace{-5pt}

\paragraph{Ideas from other $\xpos$.} We were not able to explore other aspects of existing DPO variants in detail, and there might be still promising candidates in further improving the preference of RainbowPO. Some methods that propose to update the reference policy dynamically: sDPO~\citep{sDPO}, TR-DPO~\citep{TR-DPO}; or learning from noisy preferences~\cite{chowdhury2024provably}. Additionally, we also recognize the recent literature in pursuing online methods, such as online DPO~\citep{onlineDPO} or iterative DPO~\citep{yuan2024self,xiong2024iterative}, which provide valuable insights on possibly further improving the downstream task performance: we will pursue them in future research. Other extensions beyond RLHF include, Nash Learning from human feedback~\citep{Nashing_Learning}, and self-play preference optimization~\citep{chen2024self}.
\vspace{-5pt}

\paragraph{Demystifying observations.} We also made some interesting observations in the paper, which we fail to find proper mathematical explanations and may boost further research. For example, the RainbowPO objective could benefit much more than SimPO when increasing the training epochs, but reasons for such phenomenons are still unknown. In addition, we found some mathematically orthogonal components are actually not empirically independent, for example, RSO can improve DPO, but can not be readily combined with other components like length normalization. It is also interesting to see the some combination of components reach effects ``$1+1>2$"; it will be interesting to understand the deeper underlying reasons and could potentially lead to better algorithms.
\vspace{-15pt}

\section{Conclusion}
\label{section:conclusion}
\vspace{-2pt}

In this paper, we propose \benchmarknameonly, a comprehensive framework that demystifies and enhances existing DPO methods through the integration of key components into a unified objective. Our findings highlight the effectiveness of length normalization, reference policy mixing, and contextual scaling. However, the selective application of rejection sampling and home advantage is not providing incremental improvements either individually or when paired with the other methods. By demonstrating that these enhancements can coexist within a single algorithm to achieve state-of-the-art performance for fine-tuning SOTA commercial LLM, we pave the way for future research and practical applications. We aim for this work to serve as a foundation for refining DPO methodologies and to inspire further exploration of untested components for integrated agents.


\section*{Acknowledgments}
We thank Stephen Rawls, Supriyo Chakraborty and Kartik Balasubramaniam for helpful feedbacks and pointers to relevant works. We also thank Benjamin Therien, Ashwinee Panda, Zain Sarwar, Akshaj Kumar Veldanda, and Andrei Mircea for helpful discussions.

Wenpin Tang and Hanyang Zhao are supported by NSF grant DMS-2206038, a start-up grant at Columbia University, and the Columbia Innovation Hub grant. The works of Hanyang Zhao and David D. Yao are part of a Columbia-CityU/HK collaborative project that is supported by InnoHK Initiative, The Government of the HKSAR and the AIFT Lab. 

\bibliography{iclr2025_conference}
\bibliographystyle{iclr2025_conference}

\newpage
\appendix
\section{Background on RLHF and RL}
\subsection{RLHF} \label{subsection:rlhf}
RLHF~\citep{RLHF2022,LSHF2020,FTLM2020}.
On top of $\pi^{\mathrm{SFT}}$, RLHF is proposed to serve as the next step to conduct further fine-tuning to generate high-quality outputs as judged by humans. 
Given a generative model $\pi$, the model $\pi$ is prompted with prompts $x$ to produce pairs of answers (or, ``completions"), 
$\left\{y_1, y_2\right\} \sim \pi(y \mid x)$,
which are then presented to human labelers 
who express preferences for one completion over the other.
Denote by $y_w \succ y_l \mid x$,
meaning that $y_w \in \left\{y_1, y_2\right\}$ is preferred over $y_l \in \left\{y_1, y_2\right\}$. The preferences are assumed to be generated by some latent reward model $r^*(x, y)$, 
which we do not have access to.
Based on the collected preference data $\{x^{(i)}, y_w^{(i)}, y_l^{(i})\}_{i=1}^N$, RLHF consists of first learning a reward model $r(x,y)$, 
followed by learning a policy $\pi_r (y \mid x)$ 
in which the prompt $x$ is the state, and the completion $y$ is the action.

(a) {\bf Reward Model}.
To capture the underlying human preferences, 
RLHF assumes the Bradley-Terry model \citep{BT} that stipulates the pairwise preference distribution:
\begin{equation}
\label{eq:BT}
p^*\left(y_1 \succ y_2 \mid x\right):=\frac{\exp \left(r^*\left(x, y_1\right)\right)}{\exp \left(r^*\left(x, y_1\right)\right)+\exp \left(r^*\left(x, y_2\right)\right)} 
= \sigma\left(r^*\left(x, y_1\right) - r^*\left(x, y_2\right) \right),
\end{equation}
where 
$\sigma(\cdot)$ is the sigmoid function. 
Given access to a static dataset of comparisons
$\mathcal{D}=\{x^{(i)}, y_w^{(i)}, y_l^{(i)}\}_{i = 1, \ldots, N}$,
RLHF seeks to approximate the latent reward $r^*(x,y)$ by a family of functions $\{r_\psi(x,y)\}_\psi$,
and estimate the parameters by minimizing the (negative) log-likelihood loss
$\min_\psi \mathcal{L}\left(r_\psi, \mathcal{D}\right):=-\mathbb{E}_{\left(x, y_w, y_l\right) \sim \mathcal{D}}\left[\log \sigma\left(r_\psi\left(x, y_w\right)-r_\psi\left(x, y_l\right)\right)\right]$.
Denote by $r_{\psi_*}(x,y)$ the solution to this problem.

(b) {\bf RL}. The learned reward function $r_{\psi_*}(x,y)$ is then used to provide feedback to the language model.
More precisely, the following  KL-regularized RL problem is considered:
\begin{equation}
\label{RLHF objective}
\begin{aligned}
& \max_{\pi} \mathbb{E}_{x \sim \mathcal{D}}\left[\mathbb{E}_{y \sim \pi (y \mid x)}\left[r_{\psi_*}(x, y)\right]-\beta \mathrm{KL}\left(\pi (\cdot\mid x) \| \pi_{\mathrm{ref}}(\cdot \mid x)\right)\right], \\
\end{aligned}
\end{equation}
where $\beta > 0$ is a hyper-parameter controlling the deviation from the reference policy
$\pi_{\mathrm{ref}} = \pi^{\mathrm{SFT}}$.
The regularization is important as it prevents deviating too far from the SFT model that is trained to conform to the true preference, 
while maintaining the generation diversity to avoid mode-collapsing to a single high-reward answer. 
In view of Equation \ref{RLHF objective}, RLHF uses the reward function
$r(x, y)=r_\psi(x, y)-\beta\left(\log \pi (y \mid x)-\log \pi_{\text {ref }}(y \mid x)\right)$,
and solves the RL problem by proximal policy optimization (PPO) \citep{PPO}.

\subsection{DPO}
\label{subsection:DPO}
One disadvantage of RLHF is that the RL step often requires substantial computational effort (e.g., to carry out PPO). The idea of DPO is to combine the reward model and RL in RLHF into a single objective, bypassing the computation in the RL step. The key realization is that given a reward function $r(x,y)$,
the RL problem in Equation \ref{RLHF objective}
has a closed-form solution
$\pi_r(y \mid x)=\frac{1}{Z(x)} \pi_{\text {ref }}(y \mid x) \exp \left(\frac{1}{\beta} r(x, y)\right)$,
where $Z(x)=\sum_y \pi_{\text {ref }}(y \mid x) \exp \left(\frac{1}{\beta} r(x, y)\right)$.
Rewrite the above as
$r(x, y)=\beta \log \frac{\pi_r(y \mid x)}{\pi_{\text {ref }}(y \mid x)}+\beta \log Z(x)$.
Through this change of variables, the latent reward $r^*(x,y)$ can be expressed in terms of the optimal
 policy $\pi^*(y \mid x)$,
the reference policy $\pi_{\text {ref }}(y \mid x)$
and a constant $Z^*(x)$.
Substituting this $r^*$ expression into Equation \ref{eq:BT} yields:
\begin{equation}
\label{eq:ppi}
p^*\left(y_1 \succ y_2 \mid x\right)=\sigma\left(\beta \log \frac{\pi^*\left(y_1 \mid x\right)}{\pi_{\text {ref }}\left(y_1 \mid x\right)}-\beta \log \frac{\pi^*\left(y_2 \mid x\right)}{\pi_{\text {ref }}\left(y_2 \mid x\right)}\right),
\end{equation}
where $Z^*(x)$ cancels out. 
the preference distribution only depends on $\pi^*(y \mid x)$ and $\pi_{\text {ref }}(y \mid x)$.
The expression in Equation \ref{eq:ppi} motivates the DPO objective:
\begin{equation}
\min_{\theta} \mathcal{L}_{\mathrm{DPO}}\left(\pi_{\theta} ; \pi_{\mathrm{ref}}\right)
:=-\mathbb{E}_{\left(x, y_w, y_l\right) \sim \mathcal{D}}\left[\log \sigma\left(\beta \log \frac{\pi_{\theta} \left(y_w \mid x\right)}{\pi_{\text {ref }}\left(y_w \mid x\right)}-\beta \log \frac{\pi_{\theta} \left(y_l \mid x\right)}{\pi_{\text {ref }}\left(y_l \mid x\right)}\right)\right],
\end{equation}




\newpage

\section{Miscellaneous} \label{sec:misc}
Table~\ref{tab:table-of-notations} lists the common notations used in this paper. The table serves as a quick reference guide for understanding the mathematical expressions and technical terms used throughout the paper.
\begin{table}[!ht]
\centering
\resizebox{\textwidth}{!}{
    \begin{tabular}{lcl}
    \toprule
    \textbf{Name} & \textbf{Notation} & \multicolumn{1}{c}{\textbf{Description}} \\ \midrule
    Input Sequence & $x$ & Input sequence that is passed to the model. \\ 
    Output Sequence & $y$ & Expected label or output of the model. \\ \midrule
    Dispreferred Response & $y_l$ & Negative samples for reward model training. \\ 
    Preferred Response & $y_w$ & Positive samples for reward model training. \\ \midrule
    Optimal Policy Model & $\pi^*$ & Optimal policy model. \\
    Policy Model & $\pi_\theta$ & Generative model that takes the input prompt and \\
    & & returns a sequence of output or probability distribution. \\ 
    Reference Policy Model & $\pi_\text{ref}$ & Generative model that is used as a reference to \\
    & & ensure the policy model is not deviated significantly. \\ \midrule
    Preference Dataset & $\mathcal{D}$ & Dataset with a set of preferred and dispreferred responses.\\ 
    Preference Dataset by RSO & $\mathcal{D}_\text{RSO}$ & Dataset with a set of preferred and dispreferred responses \\ &&sampled by Rejection Sampling. \\
    \midrule
    Loss Function & $\mathcal{L}$ & Loss function. \\ 
    Regularization Hyper-parameter & $\beta$ & Regularization Hyper-parameter for preference tuning. \\
    Mixing Hyper-parameter & $\alpha$ & Our proposed mixing coefficient for better reference policy. \\
    Reward & $r$ & Reward score. \\  
    Target Reward Margin & $\gamma$ & The margin separating the winning and losing responses. \\
    \bottomrule
    \end{tabular}
}
\caption{Table of Terminology and Notation.}
\label{tab:table-of-notations}
\end{table}

\newpage

\section{Mathematical explanation of Different XPOs}
\label{XPOs categorization}

Here we first list popular $\xpos$ variants in the literature as in ~\cite{SimPO} and~\cite{winata2024preference}. 
\begin{table}[!htbp]
\centering
\resizebox{0.9\textwidth}{!}{
\begin{tabular}{ll}
\toprule 
\textbf{Method} & \textbf{Objective}  \\
\midrule 
DPO & $-\log \sigma \left( \beta \log \frac{\pi_\theta(y_w|x)}{\pi_{\text{ref}}(y_w|x)} - \beta \log \frac{\pi_\theta(y_l|x)}{\pi_{\text{ref}}(y_l|x)}\right)$ \\ \midrule 
IPO & $ \left( \beta\log \frac{\pi_\theta(y_w|x)}{\pi_{\text{ref}}(y_w|x)} - \beta\log \frac{\pi_\theta(y_l|x)}{\pi_{\text{ref}}(y_l|x)} - \frac{1}{2} \right)^2 $ \\  
\midrule
$f$-DPO & $-\log \sigma\left(\beta f^{\prime}\left(\frac{\pi_{\boldsymbol{\theta}}\left(y_w \mid x\right)}{\pi_{\mathrm{ref}}\left(y_w \mid x\right)}\right)-\beta f^{\prime}\left(\frac{\pi_{\boldsymbol{\theta}}\left(y_l \mid x\right)}{\pi_{\mathrm{ref}}\left(y_l \mid x\right)}\right)\right)$  \\ 
\midrule
KTO & $-\lambda_w \sigma \left( \beta \log \frac{\pi_\theta(y_w|x)}{\pi_{\text{ref}}(y_w|x)} - z_{\text{ref}} \right) -  \lambda_l \sigma \left( z_{\text{ref}} - \beta \log \frac{\pi_\theta(y_l|x)}{\pi_{\text{ref}}(y_l|x)} \right),\,$ \\  
& $\text{where} \,\, z_{\text{ref}} = \mathbb{E}_{(x, y) \sim \mathcal{D}} \left[\beta \text{KL}\left( \pi_\theta(y|x) || \pi_{\text{ref}}(y|x) \right)  \right]$ \\ 
\midrule
ODPO & $-\log \sigma \left( \beta \log \frac{\pi_\theta(y_w|x)}{\pi_{\text{ref}}(y_w|x)} - \beta \log \frac{\pi_\theta(y_l|x)}{\pi_{\text{ref}}(y_l|x)}-\Delta_r(x)\right)$\\  
\midrule
MallowsPO & $-\log \sigma  \left( \phi(x)\left[\beta \log \frac{\pi_\theta(y_w|x)}{\pi_{\text{ref}}(y_w|x)} - \beta \log \frac{\pi_\theta(y_l|x)}{\pi_{\text{ref}}(y_l|x)}\right] \right)$\\  
\midrule
R-DPO & $-\log \sigma \left( \beta \log \frac{\pi_\theta(y_w|x)}{\pi_{\text{ref}}(y_w|x)} - \beta \log \frac{\pi_\theta(y_l|x)}{\pi_{\text{ref}}(y_l|x)} - \left(\alpha |y_w| - \alpha |y_l| \right) \right)$ \\ \midrule
CPO & $-\log p_\theta(y_w|x)-\log \sigma \left( \beta \log \pi_\theta(y_w|x)- \beta \log \pi_\theta(y_l|x)\right)$  \\ 
\midrule
ORPO & $-\log p_\theta(y_w|x) - \lambda  \log \sigma \left(\log \frac{p_\theta(y_w|x)}{1 - p_\theta(y_w|x)} - \log \frac{p_\theta(y_l|x)}{1 - p_\theta(y_l|x)}  \right),\,$  \\  
& $\text{where} \,\, p_\theta(y|x) = \exp\left( \frac{1}{|y|} \log \pi_\theta(y|x) \right)$ \\  
\midrule
SimPO & $-\log \sigma  \left( \frac{\beta}{|y_w|} \log \pi_\theta(y_w|x) - \frac{\beta}{|y_l|} \log \pi_\theta(y_l|x) - \gamma \right)$ \\
\bottomrule
\end{tabular}
}
\caption{Various preference optimization DPO objectives. The table is inspired from~\citet{SimPO} and~\citet{winata2024preference}.}
\label{xpos-table}  
\end{table}

Next we include the categorization of different methods, which appeared earlier in Table \ref{xpos-decomposition-app}, and explain detailedly the reasons after.

\begin{table}[!htbp]
\centering
\resizebox{\textwidth}{!}{
\begin{tabular}{lcccccccccccc}
\toprule 

\textbf{Method} & \textbf{Length Norm.} & \textbf{Link Func}. & \textbf{Home Adv.} & \textbf{Ref. Policy} & \textbf{Contextual Scaling} & \textbf{RS} & \textbf{SFT Loss} \\
\midrule 
DPO & $\times$ & logistic & $\times$ & SFT & $\times$ & $\times$ & $\times$\\
SLiC-HF & $\times$ & hinge & $\times$ & SFT & $\times$ & $\times$ & $\checkmark$\\
IPO & $\times$ & square & $\times$ & SFT & $\times$ & $\times$ & $\times$\\
CPO & $\times$ & logistic & $\times$ & Free & $\times$ & $\times$ & $\checkmark$\\
RSO & $\times$ & logistic / hinge & $\times$ & SFT & $\times$ & $\checkmark$ & $\times$\\
ODPO & $\times$ & logistic & $\checkmark$ & SFT & $\times$ & $\times$ & $\times$\\
ORPO & $\checkmark$ & logistic & $\times$ & Free & implicitly & $\times$ & $\checkmark$\\
WPO & $\times$ & logistic & $\times$ & SFT &  $\checkmark$ & $\times$ & $\times$\\
MallowsPO & $\times$ & logistic & $\times$ & SFT &  $\checkmark$ & $\times$ & $\times$\\
SimPO   & $\checkmark$ & logistic & $\checkmark$ & Free &  $\times$ & $\times$ & $\times$ \\
\midrule
\coloredmethodname  & $\checkmark$ & logistic & $\times$ & mixing &  $\checkmark$ & $\times$ & $\times$ \\
\bottomrule
\end{tabular}
}
\caption{Mapping of $\xpos$ with mathematically orthogonal components and validation results of their effectiveness by the downstream task evaluations.}
\label{xpos-decomposition-app}  
\end{table}

\subsection{Direct Preference Optimization (DPO)}
The loss of DPO \citep{DPO} is:
\begin{equation}
\label{DPO objective app}
\mathcal{L}_{\mathrm{DPO}}\left(\pi_{\theta} ; \pi_{\mathrm{ref}}\right)
:=-\mathbb{E}_{\left(x, y_w, y_l\right) \sim \mathcal{D}}\left[\log \sigma\left(\beta \log \frac{\pi_{\theta} \left(y_w \mid x\right)}{\pi_{\text {ref }}\left(y_w \mid x\right)}-\beta \log \frac{\pi_{\theta} \left(y_l \mid x\right)}{\pi_{\text {ref }}\left(y_l \mid x\right)}\right)\right],
\end{equation}
which can be seen that, as the baseline methods we investigate in this paper, no length normalization is adopted, link function $-\log \sigmoid $ adopts the logistic function, home advantage can be seen as None or 0, reference policy is the SFT policy to be aligned, no contextual scaling, the dataset is the hybrid or offline dataset thus there is no rejection sampling. Also there is no SFT loss.
\subsection{Sequence Likelihood Calibration from Human Feedback (SLiC-HF)}
The loss of SliC-HF \citep{zhao2023slic} is ($y_{\text{ref}}$ refers to answer generated by $\pi_{\text{ref}}$):
\begin{equation}
\label{SLiC HF loss app}
\mathcal{L}_{\text{SLiC}}{(\pi_{\theta} ; \pi_{\mathrm{ref}})}=\mathbb{E}_{\left(x, y_w, y_l\right)} \underbrace{\max \left( 0, \delta - \log \pi_{\theta} (y_w| x) + \log \pi_{\theta} (y_l|x) \right)}_{\text{rank calibration loss}} \underbrace{-\lambda \log \pi_\theta ({y_{\text{ref}}| x})}_{\text{SFT}},
\end{equation}
which can be seen that, no length normalization is adopted, link function $\max(0,\delta-x)$ adopts the hinge function, home advantage can be seen as None or 0, reference policy is the SFT policy to be aligned (in the regularization term), no contextual scaling, the dataset is the hybrid or offline dataset thus there is no rejection sampling. There is an SFT loss, which acts as the role of regularization.

\subsection{Identity Preference Optimization (IPO)}
The loss of IPO \citep{IPO} is:
\begin{equation}
\label{IPO objective app}
\mathcal{L}_{\mathrm{IPO}}\left(\pi_{\theta} ; \pi_{\mathrm{ref}}\right)
:=\mathbb{E}_{\left(x, y_w, y_l\right) \sim \mathcal{D}}\left(\beta \log \frac{\pi_{\theta} \left(y_w \mid x\right)}{\pi_{\text {ref }}\left(y_w \mid x\right)}-\beta \log \frac{\pi_{\theta} \left(y_l \mid x\right)}{\pi_{\text {ref }}\left(y_l \mid x\right)}-\frac{1}{2}\right)^2,
\end{equation}
which can be seen that, no length normalization is adopted, link function $(x-1/2)^2 $ adopts the square function, home advantage can be seen as None or 0, reference policy is the SFT policy to be aligned, no contextual scaling, the dataset is the hybrid or offline dataset thus there is no rejection sampling. Also there is no SFT loss.

\subsection{CPO}
CPO~\citep{CPO} is motivated to improve the memory and speed efficiency of DPO by neglecting the reference policy, further accompanied by a SFT loss term:
\begin{equation}
\label{CPO objective}
\mathcal{L}_{\mathrm{CPO}}\left(\pi_{\theta}\right)
:=-\mathbb{E}_{\left(x, y_w, y_l\right) \sim \mathcal{D}}\left[\log p_\theta(y_w|x)+\log \sigma \left( \beta \log \frac{\pi_\theta(y_w|x)}{\pi_\theta(y_l|x)} \right)\right].
\end{equation}
which can be seen that, no length normalization is adopted, link function $-\log \sigmoid $ adopts the logistic function, home advantage can be seen as None or 0, reference policy is None (or free), no contextual scaling, the dataset is the hybrid or offline dataset thus there is no rejection sampling. Also there is an SFT loss.

\subsection{RSO}
The loss of RSO \citep{DPO} is:
\begin{equation}
\label{RSO objective app}
\mathcal{L}_{\mathrm{RSO}}\left(\pi_{\theta} ; \pi_{\mathrm{ref}}\right)
:=-\mathbb{E}_{\left(x, y_w, y_l\right) \sim \mathcal{D}_{\text{RS}}}\left[\log \sigma\left(\beta \log \frac{\pi_{\theta} \left(y_w \mid x\right)}{\pi_{\text {ref }}\left(y_w \mid x\right)}-\beta \log \frac{\pi_{\theta} \left(y_l \mid x\right)}{\pi_{\text {ref }}\left(y_l \mid x\right)}\right)\right],
\end{equation}
which can be seen that, it only differs from DPO in applying rejection sampling for formulating the preference dataset.

\subsection{ODPO}
ODPO~\citep{ODPO} proposed to add a margin to capture the significance of preference pairs; they model this margin, or they call offset $\Delta_r$ as a monotonically increasing function $f(\cdot)$ of the difference between the scores associated with the responses:
\begin{equation}
\Delta_r(x,y_w,y_l)=\alpha f\left(\operatorname{score}\left(x, y_w\right)-\operatorname{score}\left(x, y_l\right)\right),
\end{equation}
where $\alpha$ is a hyper-parameter that controls the extent to which an offset should be enforced. The resulting objective becomes:
\begin{multline}
\label{ODPO objective}
\mathcal{L}_{\mathrm{ODPO}}\left(\pi_{\theta} ; \pi_{\mathrm{ref}}\right)
:=\\
-\mathbb{E}_{\left(x, y_w, y_l\right) \sim \mathcal{D}}\left[\log \sigma \left( \beta \log \frac{\pi_\theta(y_w|x)}{\pi_{\text{ref}}(y_w|x)} - \beta \log \frac{\pi_\theta(y_l|x)}{\pi_{\text{ref}}(y_l|x)}-\Delta_r(x,y_w,y_l)\right)\right].
\end{multline}
ODPO only differs from DPO in applying a contextual dependent margin/home advantage.

\subsection{ORPO}
Opposed to maximizing the likelihood ratios of winning and losing answers in the preference pair in DPO, ORPO \citep{ORPO} propose that odds ratio can be a more sensible choice. 
\begin{multline}
\label{ORPO objective}
\mathcal{L}_{\mathrm{ORPO}}\left(\pi_{\theta}\right)
:=\\
-\mathbb{E}_{\left(x, y_w, y_l\right) \sim \mathcal{D}}\left[\log p_\theta(y_w|x) + \lambda  \log \sigma \left(\log \frac{p_\theta(y_w|x)}{1 - p_\theta(y_w|x)} - \log \frac{p_\theta(y_l|x)}{1 - p_\theta(y_l|x)}  \right)\right].
\end{multline} 
where $p_\theta(y|x) = \exp\left( \frac{1}{|y|} \log \pi_\theta(y|x) \right)$. ORPO is similar to CPO in the sense that it is also reference model free and combined with a SFT loss; in addition, notably that ORPO also adopts a form of length regularization by normalizing the likelihoods with respect to the length, as in the definition of $p_\theta(y|x)$; finally, they compute odds ratio instead of the original likelihood ratio.

ORPO differs from DPO in being reference-model free and also yields a SFT term for regularization. Its contextual scaling is implicit, as shown in Theorem \ref{thm:proof of ORPO upper bound} for ORPO equivalent objective.

\subsection{MallowsPO}
\cite{chen2024mallows} propose a contextual scaled objective derived from MLE under Mallows: compared to DPO that puts equal weights on each prompt and preference pairs, the resulting MallowsPO adds a contextual scaling factor $\phi(x)$ that represents this dispersion of the preferences of answers to each prompt $x$:
\begin{multline}
\label{MallowsPO objective}
\mathcal{L}_{\mathrm{MallowsPO}}\left(\pi_{\theta} ; \pi_{\mathrm{ref}}\right)
:=\\
-\mathbb{E}_{\left(x, y_w, y_l\right) \sim \mathcal{D}}\left[\log \sigma  \left( \phi(x)\left[\beta \log \frac{\pi_\theta(y_w|x)}{\pi_{\text{ref}}(y_w|x)} - \beta \log \frac{\pi_\theta(y_l|x)}{\pi_{\text{ref}}(y_l|x)}\right] \right)\right].
\end{multline}
To compute this dispersion, MallowsPO provided a direct approach by using a normalized predictive entropy of preference pairs $\{y_i^w,y_i^l\}_{i=1, \ldots, N}$ with $N=\max(|y^w|,|y^l|)$:
\begin{equation}
\label{eqn:mallows dpo dispersion estimator}
\phi(x)=-\log\left(\frac{\frac{1}{2} \sum_{i=1}^{N-1}\left[H_{\pi_{\text{ref}}}(Y_{i+1}\mid Y_i=y^w_i)+H_{\pi_{\text{ref}}}(Y_{i+1}\mid Y_i=y^l_i)\right]}{\log(n)}\right).
\end{equation}

MallowsPO differs from DPO in adding a contextual scaling factor; others are the same.

\subsection{SimPO}
SimPO \citep{SimPO} proposes a simple yet effective objective that is claimed to match or even outperform the performance of DPO: 
\begin{equation}
\label{SimPO objective app}
\mathcal{L}_{\mathrm{SimPO}}\left(\pi_{\theta}\right)
:=-\mathbb{E}_{\left(x, y_w, y_l\right) \sim \mathcal{D}}\left[\log \sigma  \left( \frac{\beta}{|y_w|} \log \pi_\theta(y_w|x) - \frac{\beta}{|y_l|} \log \pi_\theta(y_l|x) - \gamma \right)\right],
\end{equation} 
where $\gamma$ is introduced as a target reward margin to help separating the winning and losing responses. SimPO is similar to CPO in the sense of being reference model free; it also adopted the length normalization for the likelihoods as in ORPO; finally, it additionally introduced a constant margin to be tuned that could help to further improve the performance by encouraging a larger difference between the normalized likelihoods.

SimPO differs from DPO in adopting length normalization, a margin/home advantage term and it is also reference-model free.


\section{Proofs and Details}
\subsection{Proof of ORPO upper bound in Equation \ref{ORPO upper bound} }
\label{proof of ORPO upper bound}
Here we prove that the part of preference optimization in ORPO's loss yields an upper bound which has instinct connection to SimPO loss, specifically the idea of length normalization.
\begin{theorem} 
\label{thm:proof of ORPO upper bound}
Assume that the normalized implicit reward model difference for preference pairs:
$$\Delta_{\theta}(x,y_w,y_l) = \frac{1}{|y_w|}\log \pi_\theta(y_w|x) - \frac{1}{|y_l|} \log \pi_\theta(y_l|x)\geq 0$$
almost surely. Then for the part of preference optimization
in ORPO loss, i.e.
\begin{equation}
\mathcal{L}_\textsc{ORPO-PO}\left(\pi_{\theta}\right)
=-\mathbb{E}_{(x,y_w,y_l)\sim\mathcal{D}}\left[  \log \sigma \left(\log \frac{p_\theta(y_w|x)}{1 - p_\theta(y_w|x)} - \log \frac{p_\theta(y_l|x)}{1 - p_\theta(y_l|x)}  \right)\right],
\end{equation}
has an upper bound such that
\begin{equation}
\mathcal{L}_{\mathrm{ORPO-PO}} \leq -\mathbb{E}\log \sigma \left(\frac{1}{1-p_\theta(y_l|x)}\left(\frac{1}{|y_w|}\log \pi_\theta(y_w|x) - \frac{1}{|y_l|} \log \pi_\theta(y_l|x)\right)\right).
\end{equation}
\end{theorem}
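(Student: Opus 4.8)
The plan is to reduce the statement to a single pointwise inequality on the quantity inside $-\log\sigma$ and then conclude by monotonicity of $-\log\sigma$ together with linearity of the expectation. First I would fix a triple $(x,y_w,y_l)$ and set $a:=p_\theta(y_w|x)$, $b:=p_\theta(y_l|x)$; since $\pi_\theta(\cdot|x)$ is a probability distribution, $a,b\in(0,1)$, and from $p_\theta(y|x)=\exp\!\big(\tfrac{1}{|y|}\log\pi_\theta(y|x)\big)$ one reads off $\log a=\tfrac{1}{|y_w|}\log\pi_\theta(y_w|x)$ and $\log b=\tfrac{1}{|y_l|}\log\pi_\theta(y_l|x)$, so $\Delta_\theta(x,y_w,y_l)=\log a-\log b$ and the argument of $\sigma$ in $\mathcal{L}_{\mathrm{ORPO-PO}}$ is $\log\frac{a}{1-a}-\log\frac{b}{1-b}$. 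The whole theorem then follows once I establish the pointwise bound
\[
\log\frac{a}{1-a}-\log\frac{b}{1-b}\ \ge\ \frac{1}{1-b}\bigl(\log a-\log b\bigr)
\]
and invoke the fact that $s\mapsto-\log\sigma(s)$ is decreasing on $\mathbb{R}$.

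The key step — and the only place any real idea is needed — is to recognize that the logit is a difference of values of a single convex function of the log-probability. Concretely, I would set $F(t):=t-\log(1-e^{t})$ on $(-\infty,0)$, note that $F(\log a)=\log\frac{a}{1-a}$, and compute $F'(t)=\frac{1}{1-e^{t}}$ and $F''(t)=\frac{e^{t}}{(1-e^{t})^{2}}>0$, so $F$ is strictly convex. The supporting-line (tangent) inequality $F(t)-F(s)\ge F'(s)(t-s)$, with $t=\log a$, $s=\log b$, and $F'(\log b)=\frac{1}{1-b}=\frac{1}{1-p_\theta(y_l|x)}$, is precisely the displayed bound. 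Applying the decreasing map $-\log\sigma(\cdot)$ to both sides gives, for every $(x,y_w,y_l)$,
\[
-\log\sigma\Bigl(\log\tfrac{a}{1-a}-\log\tfrac{b}{1-b}\Bigr)\ \le\ -\log\sigma\Bigl(\tfrac{1}{1-p_\theta(y_l|x)}\,\Delta_\theta(x,y_w,y_l)\Bigr),
\]
and taking $\mathbb{E}_{(x,y_w,y_l)\sim\mathcal{D}}$ finishes the argument.

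I do not expect a genuine obstacle: the hardest part is simply spotting the convex-function reformulation, after which the computations of $F'$, $F''$ and the tangent-line step are routine. I would, however, flag two things worth a sentence each in the write-up. The hypothesis $\Delta_\theta\ge0$ is actually not needed for the inequality (convexity of $F$ handles $t-s$ of either sign); it serves only to guarantee that the right-hand side is at most $-\log\sigma(0)=\log 2$, so the bound reads as a sensible reference-free, length-normalized DPO loss. And the second-order Taylor remainder $F(t)-F(s)-F'(s)(t-s)=\tfrac12 F''(\xi)(t-s)^2$ for some $\xi$ between $s$ and $t$ shows that the gap between $\bar{\mathcal{L}}_{\text{PO}}$ and $\mathcal{L}_{\text{PO}}$ is $O(\Delta_\theta^2)$, which is the sharpness claim stated after Equation~\ref{ORPO upper bound}.
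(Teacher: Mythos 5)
Your proposal is correct, and it reduces the theorem to exactly the same pointwise inequality as the paper, namely
\begin{equation*}
\log\frac{a}{1-a}-\log\frac{b}{1-b}\;\ge\;\frac{1}{1-b}\log\frac{a}{b},\qquad a=p_\theta(y_w|x),\ b=p_\theta(y_l|x),
\end{equation*}
followed by monotonicity of $-\log\sigma$. Where you diverge is in how this inequality is established. The paper fixes $b$ and studies $f(a):=\log\frac{a}{1-a}-\frac{1}{1-b}\log\frac{a}{b}$ directly, checking $f'(a)\ge 0$ for $a\ge b$ and $f(b)=\log\frac{b}{1-b}$; this is a bare-hands monotonicity argument that, as written, leans on the hypothesis $\Delta_\theta\ge 0$ (i.e.\ $a\ge b$). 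You instead recognize the logit as $F(\log a)$ for the convex function $F(t)=t-\log(1-e^t)$ on $(-\infty,0)$, with $F'(t)=\frac{1}{1-e^t}$, and invoke the supporting-line inequality $F(t)-F(s)\ge F'(s)(t-s)$ at $s=\log b$. Both computations are routine calculus, but the convex reformulation buys two things the paper's route does not make visible: it shows the inequality holds for either sign of $t-s$, so the hypothesis $\Delta_\theta\ge 0$ is not actually needed for the bound itself (it only keeps the right-hand side meaningful as a DPO-type loss), and the second-order Taylor remainder $\tfrac12 F''(\xi)(t-s)^2$ immediately yields the $\mathcal{O}(\Delta_\theta^2)$ sharpness claim asserted after Equation~\ref{ORPO upper bound}, which the paper states but does not prove. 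Your write-up is complete as is; just make sure to state explicitly that $a,b\in(0,1)$ so that $F$ is evaluated on its domain.
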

\begin{proof} Since $-\log \sigmoid (\cdot)$ is a monotone decreasing function, when $1>x>y>0$, it suffices to prove that for any $x$,
\begin{equation}
    \log\left(\frac{x}{1-x}\right)-\log\left(\frac{y}{1-y}\right)\geq \frac{1}{1-y}\log\left(\frac{x}{y}\right),
\end{equation}
in which $x=p_\theta(y_w|x)$, $y = p_\theta(y_l|x)$. The inequality is equivalent to:
\begin{equation}
     f(x):=\log\left(\frac{x}{1-x}\right)-\frac{1}{1-y}\log\left(\frac{x}{y}\right)\geq \log\left(\frac{y}{1-y}\right).
\end{equation}
Taking gradient of $f(x)$ with respect to $x$, we have:
$$
f^{\prime}(x)=\frac{1}{x}+\frac{1}{1-x}-\frac{1}{1-y}\cdot\frac{1}{x}=\frac{1}{x(1-x)}-\frac{1}{x(1-y)}\geq 0.
$$
Moreover, we have $f(y)=\log(\frac{y}{1-y})$, 
which yields the desired result.
\end{proof}

\newpage

\section{Experimental details}

\subsection{AlpacaEval and AlpacaEval2 Introduction}
\label{subsection:Alpaca}
We briefly introduce AlpacaEval \citep{alpaca_eval} and its evaluation pipeline and metric. This part is mainly modified and summarized from descriptions in AlpacaEval official Github repository.\footnote{\url{https://github.com/tatsu-lab/alpaca_eval}.}

In practice, for either training a LLM or comparison between different LLMs, we need to evaluate an instruction-following model (e.g., ChatGPT). However, evaluation of such models typically requires human interactions, which is time-consuming, expensive, and hard to replicate. 

\textbf{AlpacaEval}. AlpacaEval is an LLM-based automatic evaluation that is fast, cheap, replicable, and validated against 20K human annotations. It operates on a fixed set of 805 instructions chosen to be representative of user interactions on the Alpaca web demo,\footnote{\url{https://github.com/tatsu-lab/stanford_alpaca}.} and has been widely adopted for model development. Concretely, AlpacaEval adopts an automatic evaluator that has high agreement with humans (validated on 20K annotations), and evaluates a model by measuring the fraction of times a powerful LLM (e.g., GPT-4) prefers the outputs from that model over outputs from a reference model. The evaluators of AlpacaEval enable caching and output randomization by default.

\textbf{AlpacaEval2}. AlpacaEval2~\citep{dubois2024length}, also called Length Controlled AlpacaEval, is a length-debiased version of AlpacaEval. One of the major issues of AlpacaEval is that one can increase the win-rate by increasing the length of outputs. The main idea of Length Controlled (LC) WR is that for each model, AlpacaEval2 will fit a logistic regression to predict the preference of the autoannotator given: (1) the instruction, (2) the model, and (3) the difference of length between the baseline and model output. Given such a logistic regression AlpacaEval2 can then try to predict the counterfactual ``what would the preference be if the model's output had the same length as the baseline" by setting the length difference to 0. By averaging over this length-controlled preference, AlpacaEval2 then obtains the LC win rates.

Length controlled win-rates increase the correlation between AlpacaEval's leaderboard and Chat Arena from 0.93 to 0.98 Spearman correlation, while significantly decreasing the length gameability of the annotator. We refer the more concrete details of the datasets, comparison of models or evaluators and leaderboards to AlpacaEval's official website \url{https://github.com/tatsu-lab/stanford_alpaca}.

\textbf{Our paper}. In this paper, we adopted two evaluators: \texttt{GPT4~turbo} and Llama3 70B to evaluate our models' generations against base/reference answers generated by \texttt{GPT4~turbo}. We adopt the default template for evaluators provided by AlpacaEval and we adopt the following fixed generation config for each model: max new tokens 4096, temperature 0.7 and top $p$ 0.1.

\subsection{Training Details}
\label{app:training details}
Here we report the best hyper-parameters we searched which corresponds to our final results. We include the modified dpo trainer and training scripts in the supplementary materials.

\begin{table*}[!htbp]
\centering
\resizebox{0.87\textwidth}{!}{
    \begin{tabular}{l|c|c|c|c|c|c|c}
    \toprule 
    \textbf{Models}  & \textbf{$\beta$} & \textbf{$\alpha$} &  \textbf{$\gamma$} & \textbf{$\tau$} & \textbf{SFT $\lambda$} & \textbf{lr} & \textbf{WR} \\ \midrule
    Base model & 0.01 & 1 & 0 & $\infty$ & 0 & 3e$^{-7}$ & 0.1 \\
    $\quad$ + Length Norm. (LN) & 10 & 1 & 0 & $\infty$ & 0 & e$^{-6}$ & 0.1  \\
    $\quad$ + Ref. Policy Mixing (Mix) & 0.01 & 0.25 & 0.1 & $\infty$ & 0 & 3e$^{-7}$ & 0.1  \\
    $\quad$ + Contextual Scaling (CS) & 0.01 & 1 & 0 & $\infty$ & 0 & 3e$^{-7}$ &  0.1 \\
    $\quad$ + Link Function (LF) & 0.001 & 1 & 0 & $\infty$ & 0 & 3e$^{-7}$ & 0.1  \\
    $\quad$ + Home Advantage (HA) & 0.005 & 1 & 0.001 & $\infty$ & 0 & 3e$^{-7}$ & 0.1\\
    $\quad$ + Rejection Sampling (RSO) & 0.01 & 1 & 0 & 0.2 & 0 & 3e$^{-7}$ & 0.1 \\
    \midrule
    Base model + LN & 10 & 1 & 0 & $\infty$ & 0 & e$^{-6}$ & 0.1 \\
    $\quad$ + LN + Mix & 10 & 0.25 & 0.1 & $\infty$ & 0 & e$^{-6}$ & 0.1 \\
    $\quad$ + LN + CS & 10 & 1 & 0 & $\infty$ & 0 & e$^{-6}$ & 0.1 \\
    $\quad$ + LN + HA & 10 & 1 & 0.05 & $\infty$ & 0 & e$^{-6}$ & 0.1 \\
    $\quad$ + LN + RS & 10 & 1 & 0 & 0.2 & 0 & e$^{-6}$ & 0.1 \\
    $\quad$ + LN + SFT Loss & 10 & 1 & 0 & $\infty$ & 0.1 & e$^{-6}$ & 0.1 \\
    \bottomrule
    \end{tabular}
}
\caption{Hyper-parameters for results reported in Table \ref{tab:ablation-study}.}
\end{table*}

\begin{table*}[!htbp]
\centering
\resizebox{0.77\textwidth}{!}{
    \begin{tabular}{l|c|c|c|c|c}
    \toprule 
    \textbf{Models} & \textbf{$\beta$} & \textbf{$\alpha$} & \textbf{$\gamma$}  & \textbf{lr} & \textbf{WR/WS} \\ \midrule
    Base model & 0.01 & 1 &0 & 3e$^{-7}$ & 0.1 \\
    $\quad$ + Length Norm. (LN) & 10 & 1 & 0 & e$^{-6}$ & 0.1  \\
    $\quad$ + Ref. Policy Mixing (Mix) & 10 & 0.25 & 0.1 & e$^{-6}$ & 0.1  \\
    $\quad$ + Warm-up Adjustment & 10 & 0.25 & 0.1 & e$^{-6}$ & 150  \\
    $\quad$ + Contextual Scaling (CS) & 10 & 0.25 & 0.1 &e$^{-6}$ & 150  \\
    \bottomrule
    \end{tabular}
}
\caption{Hyper-parameters for results reported in Table \ref{tab:rainbowpo without rm}.}
\end{table*}

\begin{table*}[!htbp]
\centering
\resizebox{0.7\textwidth}{!}{
    \begin{tabular}{l|c|c|c|c|c}
    \toprule 
    \textbf{Models}  & \textbf{$\beta$} & \textbf{$\alpha$} &  \textbf{$\gamma$} & \textbf{lr} & \textbf{WR/WS}\\ \midrule
    DPO$^{*}$~\citep{DPO} & 0.01 & 1 & 0 & 3e$^{-7}$ & 150 \\ 
    SimPO$^{*}$~\citep{SimPO} & 10 & 0 & 0.1 & e$^{-6}$ & 150 \\ 
    \midrule
    \coloredmethodname$^{*}$ ~(3 epochs) & 10 & 0.25 & 0.1 &  e$^{-6}$ & 150\\
    \bottomrule
    \end{tabular}
}
\caption{Hyper-parameters for Table \ref{tab:baseline-results-three-epoch}.}
\label{tab:hyper-parameters-baseline-results-three-epoch}
\end{table*}

\subsection{Ablations on Training Epochs}
\label{app:training epochs ablations.}

We also conduct ablation studies on the training epochs, under the other best hyper-parameters we found under the 3 training epochs. The training loss can be found in Figure \ref{Fig:Training Loss with respect to training epochs}, and the AlpacaEval results could be found in Table \ref{tab:rainbowpo training epochs ablation}. The results indeed show that 3 training epochs yield the sweet spot, further additional epochs could slightly degrade the performance.

\begin{table*}[!htbp]
\centering
\resizebox{\textwidth}{!}{
    \begin{tabular}{l|cccc|c}
    \toprule 
    \textbf{Models}  & \multicolumn{4}{c|}{\textbf{AlpacaEval2 (GPT4)}} & \\ 
    & \multicolumn{1}{c}{LC WR (\%)} & $\sigma$ & \multicolumn{1}{c}{WR (\%)} & $\sigma$ & avg length ($\downarrow$)\\ \midrule
    \coloredmethodname & \textbf{51.66} & 0.78 & \textbf{47.92} & 1.49 & 1,878 \\ 
    $\quad$ 1 Training Epoch & 46.36 & 0.77 & 39.32 & 1.43 & 1,717\\ 
    $\quad$ 2 Training Epochs  & 50.88 & 0.77 & 47.34 & 1.47 & 1,912 \\
    $\quad$ 4 Training Epochs & 50.61 & 0.74  & 47.02 & 1.49 & 1874 \\ 
    \bottomrule
    \end{tabular}
}
\caption{Ablation of training epochs for \benchmarknameonly~(3 training epochs) using the same other hyper-parameters.}
\label{tab:rainbowpo training epochs ablation}
\end{table*}

\begin{figure}[htbp]
\centering
\includegraphics[width=\textwidth]{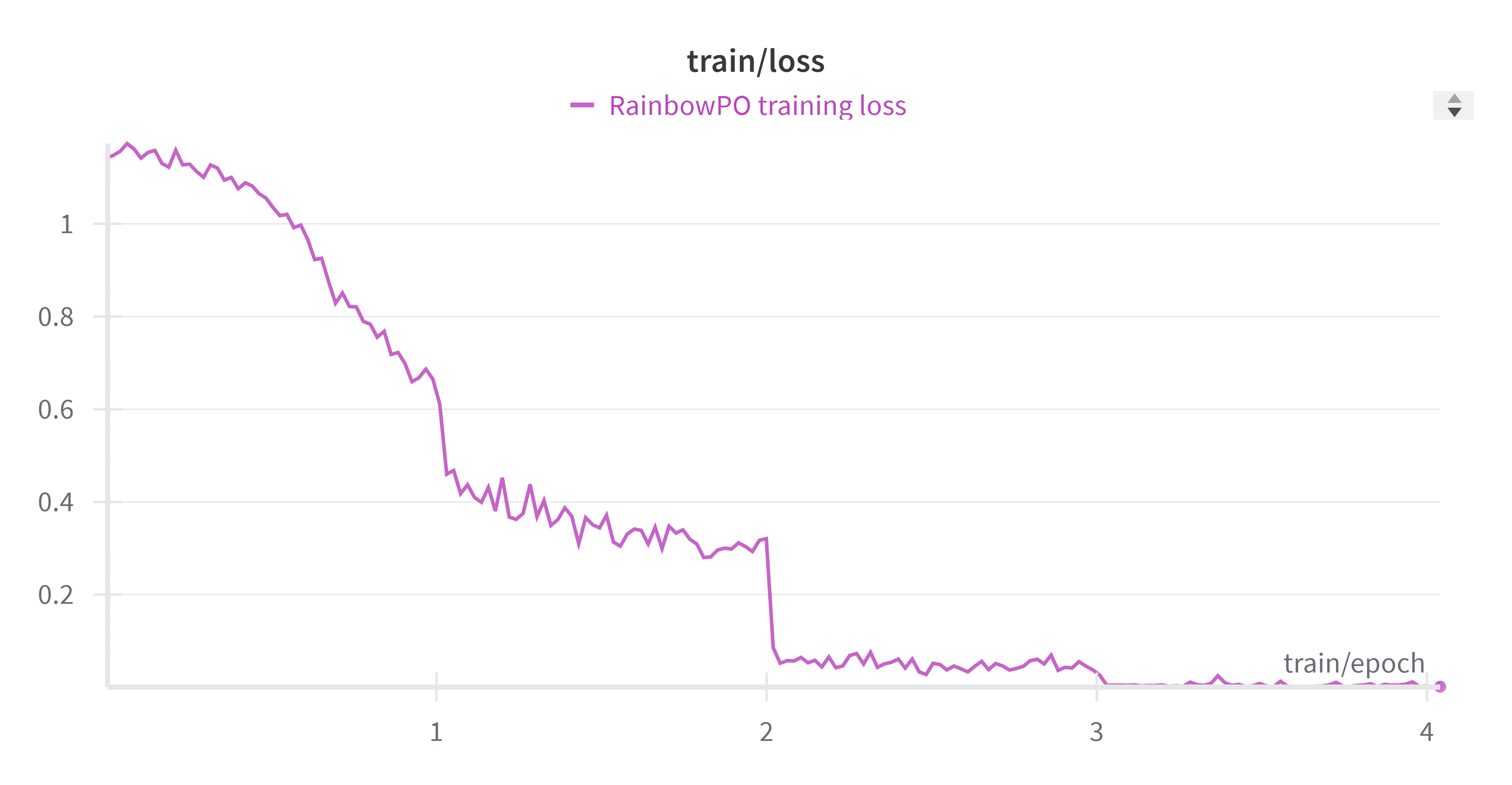}
\caption{Training loss with respect to training epochs.}
\label{Fig:Training Loss with respect to training epochs}
\end{figure}

\end{document}